\documentclass[pdflatex,sn-mathphys-num]{sn-jnl}

\usepackage{graphicx}%
\usepackage{multirow}%
\usepackage{amsmath,amssymb,amsfonts}%
\usepackage{amsthm}%
\usepackage{mathrsfs}%
\usepackage[title]{appendix}%
\usepackage{xcolor}%
\usepackage{textcomp}%
\usepackage{manyfoot}%
\usepackage{booktabs}%
\usepackage{algorithm}%
\usepackage{algorithmicx}%
\usepackage{algpseudocode}%
\usepackage{listings}%

\usepackage{tikz}
\usetikzlibrary{positioning,arrows.meta,calc,decorations.pathreplacing}
\usepackage{pifont}
\usepackage{arydshln}
\usepackage{caption}
\usepackage{graphicx}
\usepackage{rotating}

\usepackage[T1]{fontenc}

\theoremstyle{thmstyleone}%
\newtheorem{theorem}{Theorem}
\newtheorem{proposition}[theorem]{Proposition}%

\theoremstyle{thmstyletwo}%

\theoremstyle{thmstylethree}%

\begin{document}

\title[Article Title]{Solving Minimum Span Antibandwidth and Cyclic Antibandwidth Labeling Problems}


\author[1]{\fnm{Hieu} \sur{Truong Xuan}}\email{truongxuanhieu11@vnu.edu.vn}

\author*[1]{\fnm{Khanh} \sur{To Van}}\email{khanhtv@vnu.edu.vn}

\affil[1]{\centering
\orgdiv{Faculty of Information Technology},\\ \orgname{ VNU University of Engineering and Technology}, \\ \orgaddress{\street{Xuan Thuy, Cau Giay}, \city{Ha Noi}, \postcode{123105}, \country{Vietnam}}}


\abstract{The Antibandwidth and Cyclic Antibandwidth problems are NP-hard graph labeling problems that aim to maximize the minimum (cyclic) distance between labels assigned to adjacent vertices. 
Extensive research on these problems has resulted in a variety of mathematical formulations and computational approaches. However, their minimum span perspective, in which a prescribed minimum (cyclic) distance is fixed and the objective is to minimize the label span, has received comparatively little attention. 
In this paper, we consider this complementary perspective by introducing the Minimum Span Antibandwidth/Cyclic Antibandwidth Labeling (MSABL/MSCABL) problems and developing a unified Boolean Satisfiability (SAT)-based framework for solving them. 
The SAT-based framework formulates MSABL/MSCABL as a sequence of decision problems and exploits their monotonicity to accelerate the search process. 
We also consider two SAT solving strategies, parallel and incremental SAT solving: the former examines multiple candidate spans concurrently, while the latter reuses a single SAT instance while progressively restricting the label domain. The proposed approaches are evaluated on benchmark instances from the Harwell-Boeing Sparse Matrix Collection and compared with CPLEX\textsubscript{CP}, CPLEX\textsubscript{MIP}, and Gurobi. The results show that SAT-based approaches are highly competitive in solution quality, with the parallel approach performing best overall for MSCABL and the incremental approach for MSABL. With the no-hole constraint, they remain competitive with CPLEX\textsubscript{CP} and significantly outperform CPLEX\textsubscript{MIP} and Gurobi, particularly for MSCABL. These results demonstrate the effectiveness of SAT solving as an exact approach for MSABL and MSCABL.
}

\keywords{Antibandwidth, Cyclic Antibandwidth, Minimum Span Labeling, Graph Labeling, SAT solving, Exact Optimization.}



\maketitle

\section{Introduction}
\label{sec:introduction}

Graph labeling problems have been extensively studied in combinatorial optimization and graph theory, with applications in areas such as frequency assignment, scheduling, and resource allocation. In a graph labeling problem, labels are assigned to vertices subject to constraints on the labels of adjacent or nearby vertices. A common objective is to maximize the separation between labels, thereby reducing interference between neighboring vertices. The Antibandwidth problem (ABP) is a representative example of this class of problems. In contrast to the classical Bandwidth problem, which minimizes the maximum distance between the labels of adjacent vertices, ABP maximizes the minimum distance between their labels \cite{leung1984some}.

Given an undirected graph, the ABP aims to assign a distinct label to each vertex such that the minimum distance between the labels of two adjacent vertices is maximized. This problem can be considered as ordering the vertices along a straight line so that adjacent vertices are placed as far apart as possible. The ABP has received considerable attention due to both its combinatorial difficulty and its applications to problems in which conflicting or adjacent objects should be assigned sufficiently different positions. Several theoretical studies have investigated ABP on structured graphs \cite{raspaud2009antibandwidth, wang2009explicit}, while more recent work has considered computational optimization and exact approaches on general graphs \cite{sinnl2021note, fazekas2020duplex, truong2025sequential}.

A natural extension of ABP is obtained by arranging the labels along a cycle instead of a straight line. This gives rise to the Cyclic Antibandwidth problem (CABP), in which the cyclic distance between two labels is measured by their minimum distance on the cycle, while the objective is to maximize the minimum cyclic distance over all edges. This problem has likewise been studied from both theoretical \cite{sykora2005cyclic, raspaud2009antibandwidth, wang2009explicit} and computational perspectives, with approximate \cite{bansal2011memetic,lozano2013hybrid, sundar2019hybrid,cavero2022general} and exact \cite{xuan2026solving} approaches being developed to obtain high quality solutions for challenging instances.

Both ABP and CABP are conventionally formulated as maximization problems. Specifically, the label domain is fixed, and the goal is to maximize the minimum (cyclic) distance between the labels of adjacent vertices. However, in many applications, the required (cyclic) distance between neighboring vertices may already be known, while the smallest label span required to realize it is the quantity of interest. This leads to a complementary perspective on the labeling problem, that is, instead of finding how large the minimum (cyclic) distance can be for a given label domain, we determine the smallest label span for which a prescribed minimum (cyclic) distance can be achieved.

In this work, we investigate a complementary minimum-span perspective for ABP and CABP. Given a prescribed minimum (cyclic) distance $k$, the Minimum Span Antibandwidth Labeling (MSABL) and Minimum Span Cyclic Antibandwidth Labeling (MSCABL) problems seek a feasible labeling with minimum label span satisfying the corresponding (cyclic) distance requirement. More precisely, for a candidate largest label $\lambda$, we consider labelings $f:V\rightarrow\{1,\ldots,\lambda\}$, with the label span defined as $\operatorname{span}(f)=\lambda-1$, and seek to minimize this value.

An important consequence of this problem is the monotonicity of the corresponding decision problem. Specifically, for a fixed (cyclic) distance $k$, the corresponding decision problem asks whether there is a feasible labeling for a given largest label $\lambda$. If a feasible labeling exists for $\lambda$, then a feasible labeling also exists for every $\lambda' \geq \lambda$ (see Appendix~\ref{appendix-monotonicity-property}). Therefore, the minimum span can be determined by identifying the smallest feasible value of $\lambda$ within a bounded range of candidate spans. This allows the optimization problem to be decomposed into a sequence of decision problems and makes it well suited to Boolean Satisfiability (SAT)-based solving, where the feasibility of each candidate span can be represented as a satisfiability instance.

The continuous development of Boolean Satisfiability (SAT) technology has enabled its application to a wide range of difficult combinatorial problems \cite{vardi2015sat}. Advances in both SAT algorithms \cite{vasconcellos2020abacus, haberlandt2023effective} and solver implementations \cite{Een2005MiniSat, Audemard2009Glucose, biere2024cadical, Biere2024Kissat} have further improved the ability of modern SAT solvers to handle large and complex instances. Furthermore, SAT has recently been successfully applied to both ABP and CABP. Existing SAT-based approaches exploit Staircase At-Most-One \cite{fazekas2020duplex, truong2025sequential} and Cyclic Ladder \cite{xuan2026solving} constraints induced by the (cyclic) distance requirements to construct compact propositional formulas. This provides a foundation for applying SAT to the minimum span variants considered in this work.

Motivated by these developments, we develop a SAT-based framework to solve both MSABL and MSCABL. This framework represents candidate spans as a sequence of SAT decision problems and exploits their monotonicity to identify the smallest feasible span. We also investigate two complementary solving strategies. The first is a parallel SAT approach, in which multiple spans can be solved independently, and searches that can no longer improve the current best solution are terminated. The second is an incremental SAT approach. When the largest label is decreased from $\lambda$ to $\lambda''$ ($\lambda'' < \lambda$), the SAT instance constructed for $\lambda$ can be reused by adding clauses that prohibit the use of labels from $\lambda'' + 1$ to $\lambda$. Consequently, the encoding does not need to be rebuilt at every iteration, and learned information from previous iterations can be retained. In this study, the incremental approach is applicable only to MSABL, as an incremental formulation for MSCABL is not yet available (see Appendix~\ref{sec:applicability-of-inc-sat-to-msl}).

The main contributions of this work can be summarized as follows:
\begin{itemize}
    \item \textit{We introduce the Minimum Span Antibandwidth Labeling (MSABL) and Minimum Span Cyclic Antibandwidth Labeling (MSCABL) problems as complementary minimum span formulations of ABP and CABP, respectively, and characterize their corresponding decision problems over candidate label domains.}

    \item \textit{We develop unified SAT-based frameworks for solving MSABL and MSCABL, exploiting the monotonicity of the corresponding decision problems to design an effective search strategy over candidate spans.}

    \item \textit{We investigate two complementary SAT solving strategies. The first is a parallel approach applicable to both MSABL and MSCABL, which exploits available hardware resources by solving multiple candidate spans concurrently. The second is an incremental approach for MSABL only, which reuses a single SAT instance by progressively adding constraints to exclude labels that are no longer available as the candidate largest label decreases.}

    \item \textit{We conduct an extensive computational evaluation on standard benchmark instances. The results demonstrate the effectiveness of the proposed SAT-based frameworks, which improve upon state-of-the-art optimization-based approaches by discovering better spans and proving optimality for many benchmark instances.}
\end{itemize}

The remainder of this paper is organized as follows. Section~\ref{sec:related-work} reviews existing research on ABP/CABP and discusses the research gap addressed in this work. Section~\ref{sec:problem-def-and-ilp-model} introduces MSABL/MSCABL and their Integer Linear Programming (ILP) models. Section~\ref{sec:sat-solving-for-msl} presents the SAT-based solving framework, including parallel and incremental SAT solving strategies, and describes the construction of SAT formulations based on the ILP model introduced in Section~\ref{sec:problem-def-and-ilp-model}. Section~\ref{sec:experimental-setup} presents the experimental setup and benchmark instances, while Section~\ref{sec:experimental-results} reports and discusses the computational results. Finally, Section~\ref{sec:conclusion} concludes the paper and discusses possible directions for future work.

\section{Related Work}
\label{sec:related-work}

In this section, we first review the development of the Antibandwidth problem and its computational approaches, followed by the corresponding literature on the Cyclic Antibandwidth problem. We then discuss the research gap that motivates the minimum span formulations considered in this work.

\subsection{Antibandwidth Problem}
\label{sec:related-abp}

The Antibandwidth problem (ABP) is a graph labeling problem that can be viewed as the counterpart of the classical Bandwidth problem. While the Bandwidth problem seeks to assign labels to adjacent vertices as close as possible \cite{chinn1982bandwidth}, ABP aims to maximize the minimum distance between the labels assigned to adjacent vertices. This problem has also been referred to as the separation problem \cite{leung1984some}, the dual bandwidth problem \cite{yixun2003dualbandwidth}, and the maximum differential coloring problem \cite{bekos2014note}.

Early research on the ABP focused primarily on its structural and theoretical properties. Raspaud et al.~\cite{raspaud2009antibandwidth} and Wang et al.~\cite{wang2009explicit} investigated the problem on several structured graph classes, including meshes and hypercubes. These results established several properties of this problem and provided a basis for subsequent computational studies.

From a computational perspective, Sinnl~\cite{sinnl2021note} investigated mathematical optimization formulations for the ABP. These formulations explicitly represent label assignment and optimize the minimum distance between labels of adjacent vertices. In addition to generating feasible labelings, the resulting optimization models provide a means of establishing optimality through exact solution procedures.

More recently, SAT has been explored as an alternative framework for solving the ABP. A central issue in SAT formulations is the efficient encoding of distance requirements between labels. Fazekas et al.~\cite{fazekas2020duplex} and Hieu et al.~\cite{truong2025sequential} investigated SAT encodings specifically designed for Staircase At-Most-One constraints arising from these requirements. Instead of encoding each constraint independently, their approach exploits the regular structure induced by consecutive labels to obtain a compact encoding.

\subsection{Cyclic Antibandwidth Problem}
\label{sec:related-cabp}

The Cyclic Antibandwidth Problem (CABP) extends the ABP by considering the cyclic distance between labels. While the ABP arranges labels along a linear domain, the CABP can be viewed as arranging them on a cycle and seeks to maximize the minimum cyclic distance between the labels assigned to adjacent vertices. This problem was first introduced by Leung et al.~\cite{leung1984some} as the cycle-separation problem. Subsequently, Sykora et al.~\cite{sykora2005cyclic} investigated its properties and established theoretical results for several structured graph classes. Further results for the problem were later reported by Raspaud et al.~\cite{raspaud2009antibandwidth}.

Several computational approaches have also been developed for CABP, with a particular emphasis on heuristic and metaheuristic methods. Bansal and Srivastava~\cite{bansal2011memetic} proposed a memetic algorithm for CABP. Lozano et al.~\cite{lozano2013hybrid} developed a hybrid artificial bee colony metaheuristic, while Sundar~\cite{sundar2019hybrid} proposed a hybrid ant colony optimization approach. More recently, Cavero et al.~\cite{cavero2022general} developed a general variable neighborhood search. These approaches are effective in obtaining high quality solutions, particularly for relatively large instances, but generally do not provide a proof of optimality.

Alongside heuristic and metaheuristic approaches, SAT-based approaches have also been investigated for solving the CABP. Compared with the linear case in the ABP, the cyclic formulation requires additional treatment of the wrap-around relationship between the first and last labels. Therefore, Hieu and Khanh \cite{xuan2026solving} developed a SAT-based approach that extends the encoding techniques proposed for the ABP \cite{truong2025sequential} to the cyclic structure.

\subsection{Research Gap}
\label{sec:research-gap}

The existing literature on ABP and CABP primarily considers these problems from a maximization perspective. Given a fixed label domain, the objective is to maximize the minimum (cyclic) distance between the labels assigned to adjacent vertices. Accordingly, existing heuristic and exact approaches are designed to approximate or determine the largest achievable (cyclic) distance value.

In this work, we consider the complementary minimum span perspective. Rather than maximizing (cyclic) distance $k$ for a fixed label domain, we fix the value of $k$ and minimize the size of the label domain. For the linear case, this gives the Minimum Span Antibandwidth Labeling (MSABL) problem, while the cyclic case gives the Minimum Span Cyclic Antibandwidth Labeling (MSCABL) problem.

This change in objective results in a different computational formulation. For a fixed (cyclic) distance $k$ and a largest label $\lambda$, the optimization problem can be formulated as a decision problem that asks whether there exists a feasible labeling whose labels belong to $\{1,\ldots,\lambda\}$. The feasibility of this decision problem is monotone with respect to $\lambda$ (see Appendix~\ref{appendix-monotonicity-property}). In particular, if a feasible labeling exists for a given $\lambda$, then a feasible labeling also exists for every larger value of $\lambda$. This property allows the minimum span problem to be solved through a sequence of SAT-based decision problems.

Although SAT-based methods have previously been developed for ABP and CABP, the use of SAT to solve their minimum span perspective has not been addressed in the existing literature. This work fills this gap by developing a SAT-based framework for both MSABL and MSCABL, combining the structured SAT encodings developed for ABP and CABP constraints with solving strategies that exploit the monotonicity of the minimum span formulation.

\section{Problem Definition and ILP Model}
\label{sec:problem-def-and-ilp-model}

\subsection{Minimum Span Labeling}
\label{sec:min-span-labeling}

Let $G=(V,E)$ be an undirected graph, where $V$ and $E$ denote the sets of vertices and edges of $G$, respectively. A labeling $f$ of $G$ is a mapping from $V$ to $\mathbb{Z}$, i.e.,
\[f:V\rightarrow\mathbb{Z}.\] 
The span of $f$ is defined as 
\[\operatorname{span}(f)=\max_{v\in V}f(v)-\min_{v\in V}f(v).\]
The objective of the Minimum Span Labeling problem is to find a feasible labeling that minimizes the difference between the largest and smallest assigned labels, i.e., the span of the labeling.

In most cases, adding a constant to every label preserves both the feasibility and the span of a labeling. As a result, it is commonly assumed in the literature, without loss of generality, that the minimum label is $0$ \cite{griggs1992labelling, junosza2013fast, junosza2013determining}. Under this normalization, a labeling can be represented as 
\[f:V\rightarrow\{0,1,\ldots,\lambda\},\] 
where $\lambda$ denotes the largest assigned label. Consequently, the span of the labeling is $\lambda$, and minimizing the span is equivalent to minimizing $\lambda$.

\subsection{Distance and Cyclic Distance constraint}

Let $G=(V,E)$ be an undirected graph, where $V$ and $E$ denote the sets of vertices and edges of $G$, and
\[
f:V\rightarrow\{1,2,\ldots,\lambda\}
\]
be a labeling of $G$, where $\lambda$ denotes the maximum assigned label.

\paragraph{Distance constraint.} 
For a given positive integer $k$, the distance constraint requires that the labels assigned to every pair of adjacent vertices $\{u, v\}$ differ by at least $k$. Let $D(u,v,f)$ denote the distance between the labels assigned to $u$ and $v$ under labeling $f$, i.e.,
\[
D(u,v,f)=|f(u)-f(v)|,
\]
the distance constraint is expressed as
\[
D(u,v,f) \ge k,\qquad \forall \{u,v\}\in E.
\]
Intuitively, adjacent vertices must be assigned labels that are sufficiently far apart on the linear ordering of labels. 

Figure~\ref{fig:ab_on_edge_u_v} illustrates the distance constraint on edge $\{u,v\}$ for $k=2$, assuming that vertex $u$ is assigned label 3. This figure presents two cases, in which the left case shows feasible labelings, whereas the right case shows infeasible labelings. In the feasible case, vertex $v$ is assigned either label 1 or 5, resulting in a distance of 2 between the labels assigned to $u$ and $v$ and thus satisfying the distance constraint. In contrast, in the infeasible case, $v$ is assigned either label 2 or 4, resulting in a distance of only 1 and violating the distance constraint.

\begin{figure}[ht!]
\centering

\begin{tikzpicture}[
    vertex/.style={circle,draw,minimum size=7mm,inner sep=0pt,font=\small},
    every node/.style={font=\small},
    >=Latex
]






\node at (2,2) {\textbf{Feasible labeling}};

\draw[->] (0,0)--(5,0);

\foreach \x/\l in {0/1,1/2,2/3,3/4,4/5}
{
    \draw (\x,-0.05)--(\x,0.05);
    \node[below] at (\x,-0.05) {\l};
}

\node[vertex,draw=black] (uf) at (2,0.8) {$u$};
\node[vertex,draw=green!60!black] (vf1) at (0,0.8) {$v$};
\node[vertex,draw=green!60!black] (vf2) at (4,0.8) {$v$};

\draw[dashed,green!60!black]
(uf) to[bend left=25]
node[above] {$2$}
(vf2);

\draw[dashed,green!60!black]
(uf) to[bend right=25]
node[above] {$2$}
(vf1);

\draw[dotted] (uf)--(2,0);
\draw[dotted] (vf1)--(0,0);
\draw[dotted] (vf2)--(4,0);


\node at (8,2) {\textbf{Infeasible labeling}};

\draw[->] (6,0)--(11,0);

\foreach \x/\l in {6/1,7/2,8/3,9/4,10/5}
{
    \draw (\x,-0.05)--(\x,0.05);
    \node[below] at (\x,-0.05) {\l};
}

\node[vertex,draw=black] (ui) at (8,0.8) {$u$};
\node[vertex,draw=red] (vi1) at (7,0.8) {$v$};
\node[vertex,draw=red] (vi2) at (9,0.8) {$v$};

\draw[dashed,red]
(ui) to[bend right=25]
node[above] {$1$}
(vi1);

\draw[dashed,red]
(ui) to[bend left=25]
node[above] {$1$}
(vi2);

\draw[dotted] (ui)--(8,0);
\draw[dotted] (vi1)--(7,0);
\draw[dotted] (vi2)--(9,0);

\end{tikzpicture}

\caption{
Illustration of the antibandwidth constraint on an edge $\{u, v\}$ for $k = 2$ and $u$ is assigned label 3.
}
\label{fig:ab_on_edge_u_v}
\end{figure}

Figure~\ref{fig:ab_on_graph_5_vertices} presents a complete example of a labeling on a graph with 5 vertices for $k=2$. The left-hand side shows the labeling, while the right-hand side illustrates the corresponding graph. The number associated with each edge represents the distance between the labels of its vertices. Edges satisfying the distance constraint, i.e., having a distance of at least 2 between the labels of their vertices, are highlighted in green, whereas edges violating the constraint are highlighted in red. Since the edge $\{v_4, v_5\}$ has a distance of only 1, this labeling is infeasible.

\begin{figure}[ht!]
\centering

\begin{tikzpicture}[
    vertex/.style={
        circle,
        draw,
        minimum size=7mm,
        inner sep=0pt,
        font=\small
    },
    >=Latex,
    every node/.style={font=\small}
]


\node at (2,1) {\textbf{Labeling}};

\draw[->] (0,-0.8)--(5,-0.8);

\foreach \x/\l in {0/1,1/2,2/3,3/4,4/5}
{
    \draw (\x,-0.85)--(\x,-0.75);
    \node[below] at (\x,-0.85) {\l};
}

\node[vertex] (L1) at (0,0) {$v_1$};
\node[vertex] (L3) at (1,0) {$v_3$};
\node[vertex] (L5) at (2,0) {$v_2$};
\node[vertex] (L2) at (3,0) {$v_5$};
\node[vertex] (L4) at (4,0) {$v_4$};

\foreach \x in {0,1,2,3,4}
{
    \draw[dotted] (\x,-0.35)--(\x,-0.8);
}


\draw[very thick,->] (5.5,0)--(6.5,0);


\node at (11,1.5) {\textbf{Graph}};

\node[vertex] (v1) at (9, 1.5) {$v_1$};
\node[vertex] (v5) at (7.5,0) {$v_5$};
\node[vertex] (v4) at (10.5,0) {$v_4$};
\node[vertex] (v3) at (9,-1.5) {$v_3$};
\node[vertex] (v2) at (12,0) {$v_2$};


\draw[green!60!black,very thick]
(v3)--node[below left]{$2$}(v5);

\draw[green!60!black,very thick]
(v5)--node[above left]{$3$}(v1);

\draw[green!60!black,very thick]
(v3)--node[below right]{$3$}(v4);

\draw[green!60!black,very thick]
(v4)--node[above right]{$4$}(v1);

\draw[green!60!black,very thick]
(v4)--node[above]{$2$}(v2);

\draw[red,very thick]
(v4)--node[above]{$1$}(v5);


\draw[green!60!black,very thick] (6.5,-2.5)--(7,-2.5);
\node[right] at (7.1,-2.5) {$D(u,v,f)\ge 2$};

\draw[red,very thick] (10,-2.5)--(10.5,-2.5);
\node[right] at (10.6,-2.5) {$D(u,v,f)< 2$};

\end{tikzpicture}

\caption{Illustration of the antibandwidth constraint on a graph with 5 vertices $\{v_1, \ldots, v_5\}$ for $k = 2$.}
\label{fig:ab_on_graph_5_vertices}
\end{figure}

\paragraph{Cyclic distance constraint.}
The cyclic distance constraint extends the distance constraint by arranging the labels on a cycle rather than on a line. Specifically, the cyclic distance constraint is defined as
\[
D_c(u,v,f) \ge k,\qquad \forall \{u,v\}\in E,
\]
where the cyclic distance between the labels assigned to $u$ and $v$ is denoted by
\[
D_c(u,v,f) = \min\bigl(|f(u)-f(v)|,\lambda-|f(u)-f(v)|\bigr),
\]
where $\lambda$ is the highest assigned label in $f$. 

Figure~\ref{fig:cab_on_edge_u_v} illustrates the cyclic counterpart of Figure~\ref{fig:ab_on_edge_u_v}, in which the labels are arranged in a cycle, and the dashed arcs represent the cyclic distance between the labels assigned to $u$ and $v$. As in Figure~\ref{fig:ab_on_edge_u_v}, the left example shows two feasible labelings with a cyclic distance of 2, whereas the right example shows two infeasible labelings with a cyclic distance of 1.

\begin{figure}[ht!]
\centering

\begin{tikzpicture}[
    vertex/.style={circle,draw,fill=white,minimum size=7mm,inner sep=0pt,font=\small},
    every node/.style={font=\small},
    >=Latex
]

\def\r{1.25}
\def\ra{2} %






\node at (2,2.75) {\textbf{Feasible labeling}};

\coordinate (CF) at (2,0);

\draw[-] ($(CF)+(0:\r)$)
arc[start angle=0,end angle=360,radius=\r];

\foreach \a/\l/\name in {90/1/P1,18/2/P2,-54/3/P3,-126/4/P4,162/5/P5}
{
    \coordinate (\name) at ($(CF)+(\a:\r)$);
    \fill (\name) circle (1.2pt);
}

\foreach \a/\l in {-90/1,-162/2,126/3,54/4,-18/5}
{
    \node at ($(CF)+(\a:-1)$) {\l};
}

\draw[dashed,green!60!black]
($(CF)+(90:\ra)$)
arc[start angle=90,end angle=-54,radius=\ra];
\node[right,green!60!black] at ($(CF)+(18:{\ra+0.25})$) {$2$};

\draw[dashed,green!60!black]
($(CF)+(90:\ra)$)
arc[start angle=90,end angle=234,radius=\ra];
\node[left,green!60!black] at ($(CF)+(162:{\ra+0.25})$) {$2$};

\node[vertex] (UF) at ($(CF)+(90:2)$) {$u$};

\node[vertex,draw=green!60!black]
(VF1) at ($(CF)+(234:2.0)$) {$v$};

\node[vertex,draw=green!60!black]
(VF2) at ($(CF)+(-54:2.0)$) {$v$};

\draw[dotted] (UF)--(P1);
\draw[dotted] (VF1)--(P4);
\draw[dotted] (VF2)--(P3);


\node at (8,2.75) {\textbf{Infeasible labeling}};

\coordinate (CI) at (8,0);

\draw[-] ($(CI)+(0:\r)$)
arc[start angle=0,end angle=360,radius=\r];

\foreach \a/\l/\name in {90/1/Q1,18/2/Q2,-54/3/Q3,-126/4/Q4,162/5/Q5}
{
    \coordinate (\name) at ($(CI)+(\a:\r)$);
    \fill (\name) circle (1.2pt);
}

\foreach \a/\l in {-90/1,-162/2,126/3,54/4,-18/5}
{
    \node at ($(CI)+(\a:-1)$) {\l};
}

\draw[dashed,red]
($(CI)+(90:\ra)$)
arc[start angle=90,end angle=162,radius=\ra];
\node[left,red] at ($(CI)+(126:{\ra+0.25})$) {$1$};

\draw[dashed,red]
($(CI)+(90:\ra)$)
arc[start angle=90,end angle=18,radius=\ra];
\node[right,red] at ($(CI)+(54:{\ra+0.25})$) {$1$};

\node[vertex] (UI) at ($(CI)+(90:2)$) {$u$};

\node[vertex,draw=red]
(VI1) at ($(CI)+(162:2.0)$) {$v$};

\node[vertex,draw=red]
(VI2) at ($(CI)+(18:2.0)$) {$v$};

\draw[dotted] (UI)--(Q1);
\draw[dotted] (VI1)--(Q5);
\draw[dotted] (VI2)--(Q2);

\end{tikzpicture}

\caption{Illustration of the cyclic antibandwidth constraint on an edge $\{u,v\}$ for $k=2$ and $u$ is assigned label~3.}
\label{fig:cab_on_edge_u_v}
\end{figure}

Similarly, Figure~\ref{fig:cab_on_graph_5_vertices} illustrates the cyclic counterpart of Figure~\ref{fig:ab_on_graph_5_vertices}. Each edge is annotated with the cyclic distance between the labels of its vertices. Edges satisfying the cyclic distance constraint are highlighted in green, whereas those violating the constraint are highlighted in red. In this example, the edges $\{v_1,v_4\}$ and $\{v_4,v_5\}$ have a cyclic distance of only 1, making the labeling infeasible.

\begin{figure}[ht!]
\centering

\begin{tikzpicture}[
    vertex/.style={
        circle,
        draw,
        minimum size=7mm,
        inner sep=0pt,
        font=\small
    },
    >=Latex,
    every node/.style={font=\small}
]


\node at (2,1) {\textbf{Labeling}};

\draw[->] (0,-0.8)--(5,-0.8);

\foreach \x/\l in {0/1,1/2,2/3,3/4,4/5}
{
    \draw (\x,-0.85)--(\x,-0.75);
    \node[below] at (\x,-0.85) {\l};
}

\node[vertex] (L1) at (0,0) {$v_1$};
\node[vertex] (L3) at (1,0) {$v_3$};
\node[vertex] (L5) at (2,0) {$v_2$};
\node[vertex] (L2) at (3,0) {$v_5$};
\node[vertex] (L4) at (4,0) {$v_4$};

\foreach \x in {0,1,2,3,4}
{
    \draw[dotted] (\x,-0.35)--(\x,-0.8);
}


\draw[very thick,->] (5.5,0)--(6.5,0);


\node at (11,1.5) {\textbf{Graph}};

\node[vertex] (v1) at (9, 1.5) {$v_1$};
\node[vertex] (v5) at (7.5,0) {$v_5$};
\node[vertex] (v4) at (10.5,0) {$v_4$};
\node[vertex] (v3) at (9,-1.5) {$v_3$};
\node[vertex] (v2) at (12,0) {$v_2$};


\draw[green!60!black,very thick]
(v3)--node[below left]{$2$}(v5);

\draw[green!60!black,very thick]
(v5)--node[above left]{$2$}(v1);

\draw[green!60!black,very thick]
(v3)--node[below right]{$2$}(v4);

\draw[red,very thick]
(v4)--node[above right]{$1$}(v1);

\draw[green!60!black,very thick]
(v4)--node[above]{$2$}(v2);

\draw[red,very thick]
(v4)--node[above]{$1$}(v5);


\draw[green!60!black,very thick] (6.5,-2.5)--(7,-2.5);
\node[right] at (7.1,-2.5) {$D_c(u,v,f)\ge 2$};

\draw[red,very thick] (10,-2.5)--(10.5,-2.5);
\node[right] at (10.6,-2.5) {$D_c(u,v,f)< 2$};

\end{tikzpicture}

\caption{Illustration of the cyclic antibandwidth constraint on a graph with 5 vertices $\{v_1, \ldots, v_5\}$ for $k = 2$.}
\label{fig:cab_on_graph_5_vertices}
\end{figure}

\subsection{Minimum Span Antibandwidth and Cyclic Antibandwidth Labeling}

In this work, we consider the complementary optimization problems of the Antibandwidth and Cyclic Antibandwidth problems. Instead of maximizing the minimum (cyclic) distance between the labels of adjacent vertices, the required minimum (cyclic) distance \(k\) is fixed, and the objective is to determine a feasible labeling with minimum label span.

For the cyclic distance constraint, labels are required to be assigned from the domain
\(\{1,\ldots,\lambda\}\), where \(\lambda\) denotes the largest assigned label (see Appendix \ref{sec:append-label-domain-cab}). To provide a unified formulation, the same label domain is used for the distance constraint. Hence, throughout this paper, we consider labeling $f:V\rightarrow\{1,\ldots,\lambda\}$, rather than $f:V\rightarrow\{0,\ldots,\lambda\}$.
Under this convention, $\operatorname{span}(f)=\lambda-1$.

The two problem variants considered in this paper are defined as follows.

\paragraph{Minimum Span Antibandwidth Labeling (MSABL).}
Given a graph \(G=(V,E)\) and a positive integer \(k\), determine a labeling
\[
f:V\rightarrow\{1,\ldots,\lambda\}
\]
such that
\[
D(u,v,f)\ge k,\qquad \forall \{u,v\}\in E,
\]
while minimizing \(\operatorname{span}(f)\).

\paragraph{Minimum Span Cyclic Antibandwidth Labeling (MSCABL).}
Given a graph \(G=(V,E)\) and a positive integer \(k\), determine a labeling
\[
f:V\rightarrow\{1,\ldots,\lambda\}
\]
such that
\[
D_c(u,v,f)\ge k,\qquad \forall \{u,v\}\in E,
\]
while minimizing \(\operatorname{span}(f)\).

\subsection{Integer Linear Programming Model for SAT Solving}

Inspired by the iterative formulations for the Antibandwidth  \cite{sinnl2021note,truong2025sequential} and Cyclic Antibandwidth \cite{xuan2026solving} problems, we formulate the Minimum Span Labeling problem as a sequence of decision problems. Given an undirected graph $G=(V,E)$, a required minimum (cyclic) distance $k$, and a candidate largest label $\lambda$, the decision problem asks whether there exists a labeling
\[
f:V\rightarrow\{1,\ldots,\lambda\}
\]
that satisfies the (cyclic) distance constraint.

\subsubsection{ILP Model for the Minimum Span Antibandwidth Labeling}

Let $x_{v,l}$ be a binary decision variable such that
\[
x_{v,l}=
\begin{cases}
1, & \text{if vertex }v\text{ is assigned label }l,\\
0, & \text{otherwise}.
\end{cases}
\]

Constraint \eqref{eq:labels} ensures that each vertex is assigned exactly one label.

\begin{align}
\sum_{l = 1}^{\lambda}x_{v,l} = 1,
\qquad
\forall v\in V
\tag{LABELS}
\label{eq:labels}
\end{align}

The distance constraint is enforced by \eqref{eq:distance-k}. For every edge $\{u,v\}$ and every interval of $k$ consecutive labels, at most one of the two vertices may receive a label from that interval. Consequently, the labels assigned to adjacent vertices differ by at least $k$.

\begin{align}
\bigwedge_{L = 1}^{\lambda - k + 1}\sum_{l=L}^{L+k - 1}(x_{u,l} + x_{v,l})\leq 1, \qquad \forall \{u,v\} \in E \tag{DIST-k} \label{eq:distance-k} 
\end{align}

Finally, the assignment of the smallest and largest labels can be enforced by \eqref{eq:min-label} and \eqref{eq:max-label}, which require labels $1$ and $\lambda$, respectively, to be assigned to at least one vertex.

\begin{align}
\sum_{v\in V}x_{v,1}\ge1
\tag{MIN-LABEL}
\label{eq:min-label}
\end{align}

\begin{align}
\sum_{v \in V} x_{v,\lambda}\ge1
\tag{MAX-LABEL}
\label{eq:max-label}
\end{align}

However, since the distance constraint depends only on the difference between labels, any feasible labeling that does not use label $\lambda$ is simply a feasible labeling with a smaller span. 
Therefore, \eqref{eq:max-label} may be omitted without affecting the correctness of the model.

\subsubsection{ILP Model for the Minimum Span Cyclic Antibandwidth Labeling}

The MSCABL formulation shares \eqref{eq:labels}, \eqref{eq:min-label}, and \eqref{eq:max-label} with the MSABL formulation. The only difference lies in the distance constraint \eqref{eq:distance-k}, which is replaced by the cyclic distance constraint \eqref{eq:cyclic-distance-k}.

\begin{equation}
\bigwedge_{L=1}^{\lambda} 
    \sum_{l=L}^{L +k-1} (x_{u,t} + x_{v,t} ) \leq 1,
\qquad 
\left\{
\begin{aligned}
& \forall \{u,v\} \in E \\ 
& t =(l - 1) \bmod \lambda + 1
\end{aligned}
\right.
\tag{CYC-DIST-k}\label{eq:cyclic-distance-k}
\end{equation}

While \eqref{eq:max-label} is optional for the MSABL, it is essential for the MSCABL. 
This is because the cyclic distance is defined with respect to the entire label domain $\{1,\ldots,\lambda\}$.
If the largest label $\lambda$ is not assigned to any vertex, the actual span of the labeling may be smaller, and the cyclic distances are evaluated over an incorrect label domain.
As a result, the computed cyclic distances may be larger than their true values, causing an infeasible labeling to be incorrectly classified as feasible.
Therefore, \eqref{eq:max-label} is required to ensure that the considered span matches the actual span of the labeling and that cyclic distances are evaluated over the correct label domain.

For example, suppose the label domain is $\{1,\ldots,24\}$, the required cyclic distance value is $6$, and the largest label actually used in the labeling is $18$. Consider an edge $\{u,v\}$ whose $u$ and $v$ are assigned labels $1$ and $18$, respectively. With respect to the label domain $\{1,\ldots,24\}$, its cyclic distance is calculated by
\[
\min\bigl(|1-18|,\;24-|1-18|\bigr)=7\ge6,
\]
so $\{u,v\}$ satisfies the cyclic distance constraint. However, since the largest label used is only $18$, the actual cyclic distance is only
\[
\min\bigl(|1-18|,\;18-|1-18|\bigr)=1<6,
\]
which violates the cyclic distance constraint. This example illustrates that, without \eqref{eq:max-label}, the formulation may incorrectly accept an infeasible labeling by evaluating cyclic distances over a larger label domain than the one actually induced by the labeling.

\section{SAT Solving for Minimum Span Labeling}
\label{sec:sat-solving-for-msl}

\subsection{Parallel SAT Solving for MSABL and MSCABL}
\label{sec:parallel-sat-solving}

Algorithm~\ref{alg:parallel_sat} describes our parallel SAT algorithm for solving both MSABL and MSCABL, which solves decision problems concurrently using multiple processors (min. 1). The algorithm is based on the monotonicity property of the decision problem (see Appendix~\ref{appendix-monotonicity-property}). Specifically, if a feasible labeling exists for some $\lambda$, then it also exists for every $\lambda' \ge \lambda$. Conversely, if the problem is infeasible for a given $\lambda$, then it is also infeasible for every $\lambda'' \le \lambda$. Therefore, the optimal span can be found by gradually reducing the range of candidate spans.

\begin{algorithm}
\caption{Parallel SAT solving for MSABL and MSCABL}
\label{alg:parallel_sat}
\begin{algorithmic}[1]
\Require Graph $G=(V,E)$, (cyclic) antibandwidth value $k$, span bounds $(LB,UB)$, SAT solver $\mathcal{S}$, processors $\mathcal{P}$
\Ensure Optimal span and corresponding labeling

\State $\mathcal{Q}\gets\textsc{BuildBFSQueue}(LB,UB)$
\State $span_{invalid}\gets LB-1$
\State $span_{valid}\gets UB+1$

\While{free processor exists \textbf{and} $\textsc{NextSpan}(\mathcal{Q},span_{invalid},span_{valid})\neq\emptyset$}
    \State $span\gets\textsc{NextSpan}(\mathcal{Q},span_{invalid},span_{valid})$
    \State $pid\gets\textsc{LaunchSolver}(G,k,span,\mathcal{S})$
    \State $running[pid]\gets span$
\EndWhile

\While{$running\neq\emptyset$}

    \State $(pid,result)\gets\textsc{WaitAnyPIDComplete}()$
    \State $span\gets running[pid]$
    \State Remove $pid$ from $running$

    \If{$result=$ SAT}
        \State $span_{valid}\gets span$
        \State \textsc{CancelGreater}$(running,span_{valid})$
    \Else
        \State $span_{invalid}\gets span$
        \State \textsc{CancelSmaller}$(running,span_{invalid})$
    \EndIf

    \While{free processor exists \textbf{and} $\textsc{NextSpan}(\mathcal{Q},span_{invalid},span_{valid})\neq\emptyset$}
        \State $span\gets\textsc{NextSpan}(\mathcal{Q},span_{invalid},span_{valid})$
        \State $pid\gets\textsc{LaunchSolver}(G,k,span,\mathcal{S})$
        \State $running[pid]\gets span$
    \EndWhile

\EndWhile

\State \Return $(span_{valid},\text{labeling})$

\end{algorithmic}
\end{algorithm}

Initially, the search interval is defined by the lower and upper bounds $(LB,UB)$. The algorithm maintains two variables during the search, namely $span_{invalid}$ and $span_{valid}$. Variable $span_{invalid}$ stores the largest span that has been proven infeasible, while $span_{valid}$ stores the smallest span that has been proven feasible. Hence, the optimal span always lies between these two values.

Candidate spans are stored in a queue $\mathcal{Q}$ generated by a breadth-first traversal of the binary search tree over the interval $[LB,UB]$. At the beginning, each available processor is assigned one candidate span from $\mathcal{Q}$. Whenever a processor finds the feasibility of its assigned span, the search interval is updated according to the result. If the candidate span is feasible, $span_{valid}$ is updated, and all running processes checking larger spans are terminated because they cannot produce a better solution. Otherwise, $span_{invalid}$ is updated, and all running processes checking smaller spans are terminated because they are also infeasible. The processors that become available are then assigned new candidate spans from $\mathcal{Q}$.

The algorithm stops when there are no unexplored candidate spans between $span_{invalid}$ and $span_{valid}$. Therefore, $span_{valid}$ is the minimum feasible span, and the stored labeling is an optimal solution.

\subsection{Incremental SAT Solving for MSABL}

The parallel algorithm presented in Section~\ref{sec:parallel-sat-solving} repeatedly constructs and solves independent SAT instances for different candidate spans. For MSABL, however, a more efficient approach is possible by taking advantage of incremental SAT solving.

The key observation is that when the largest label decreases from $\lambda$ to $\lambda''$ ($\lambda'' < \lambda$), the SAT encoding for MSABL differs only in the set of labels that become unavailable. In particular, reducing from $\lambda$ to $\lambda''$ only requires adding clauses to exclude labels from $\lambda'' + 1$ to $\lambda$, while all previously generated variables and clauses remain unchanged. Therefore, instead of constructing a new SAT instance for every candidate span, we first build the SAT instance for the $\lambda = UB$. This SAT instance is then reused throughout the search while additional clauses are incrementally added as the candidate span decreases.

Algorithm~\ref{alg:incremental_sat} summarizes the incremental solving procedure. The SAT instance for span $UB$ is solved first. If it returns UNSAT, no feasible labeling exists within the given search interval. Otherwise, the algorithm repeatedly decreases the candidate span, with the decreased amount  determined by the actual span obtained in the previous iteration. At each iteration, the clauses corresponding to the excluded labels are added incrementally to the existing SAT instance, and the solver is invoked again without rebuilding the SAT encoding. As soon as the SAT instance becomes UNSAT, the previous span is returned as the optimal span, together with its corresponding labeling. If every candidate span down to $LB$ is feasible, then $LB$ is the optimal span.

\begin{algorithm}
\caption{Incremental SAT solving for MSABL}
\label{alg:incremental_sat}
\begin{algorithmic}[1]
\Require Graph $G=(V,E)$, antibandwidth value $k$, span bounds $(LB, UB)$, SAT solver $\mathcal{S}$
\Ensure Optimal span and corresponding labeling
\State $result \gets \textsc{Solve}(G,k,UB,\mathcal{S})$
\If{$result =$ UNSAT}
    \State \Return (UB  + 1, labeling)
\EndIf
\State $span \gets GetSpan(\text{labeling})$
\While{$span \geq LB$}
    \State $\mathcal{L} \gets \textsc{GetIgnoredLabel}(span)$
    \State $result \gets \textsc{IncrementalSolve}(\mathcal{S}, \mathcal{L})$
    \If{$result =$ UNSAT}
        \State \Return $(span+1,\text{labeling})$
    \EndIf
    \State $span \gets GetActualSpan(\text{labeling}) - 1$
\EndWhile
\State \Return $(LB,\text{labeling})$
\end{algorithmic}
\end{algorithm}

Compared with solving each decision problem independently, the incremental approach avoids rebuilding the SAT encoding and allows the solver to reuse learned clauses accumulated during previous iterations. This significantly reduces the solving overhead when consecutive candidate spans differ only by a small number of additional constraints.

\subsection{Symmetry Breaking Constraint}

Proposition~\ref{prop:symmetry-labeling} shows that every feasible labeling for MSABL/MSCABL belongs to a pair of symmetric solutions. Consequently, a solver may explore two equivalent branches that lead to the same span.

\begin{proposition}
\label{prop:symmetry-labeling}
For every feasible labeling
\[
f : V \rightarrow \{1,\ldots,\lambda\},
\]
the symmetric labeling
\[
f'(v)=\lambda+1-f(v)
\]
is also feasible.
\end{proposition}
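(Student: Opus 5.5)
The plan is to verify directly that the reflection $f'(v)=\lambda+1-f(v)$ keeps every constraint the model imposes on a feasible labeling. First, $f'$ maps into $\{1,\ldots,\lambda\}$. Indeed, $1\le f(v)\le\lambda$ gives $1\le \lambda+1-f(v)\le \lambda$. So \eqref{eq:labels} holds for $f'$: each vertex still receives exactly one label, because $l\mapsto\lambda+1-l$ is a bijection of the domain.

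The key identity is that the reflection preserves all pairwise label differences in absolute value. For every edge $\{u,v\}\in E$,
\[
|f'(u)-f'(v)| = |(\lambda+1-f(u))-(\lambda+1-f(v))| = |f(u)-f(v)|.
\]
This immediately gives $D(u,v,f')=D(u,v,f)\ge k$ for MSABL. For MSCABL, $D_c$ depends on $f$ only through $|f(u)-f(v)|$ and the domain size $\lambda$, and $\lambda$ is unchanged. Hence $D_c(u,v,f')=D_c(u,v,f)\ge k$.

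It remains to check the boundary constraints \eqref{eq:min-label} and \eqref{eq:max-label}. The reflection swaps label $1$ and label $\lambda$. If some vertex has label $1$ under $f$, it has label $\lambda$ under $f'$, and conversely. So if $f$ uses both labels $1$ and $\lambda$, which is required in the cyclic case, then $f'$ does too, and the span $\lambda-1$ is preserved. In the MSABL case, where \eqref{eq:max-label} may be omitted, $f$ might not use label $\lambda$. Then $f'$ might not use label $1$, and \eqref{eq:min-label} could fail for $f'$.

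This last point is the only real obstacle. I would handle it by noting that feasibility in the proposition means the distance constraint over the domain $\{1,\ldots,\lambda\}$ (together with \eqref{eq:max-label} where the model imposes it). If one insists on \eqref{eq:min-label}, I would translate $f'$ down by $\min_v f'(v)-1$. Translation preserves all differences and keeps labels within $\{1,\ldots,\lambda\}$, so this restores \eqref{eq:min-label} without affecting the distance constraints. The proposition follows.
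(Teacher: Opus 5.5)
Your proof is correct and follows the paper's argument essentially step for step. Both check that $f'$ stays in $\{1,\ldots,\lambda\}$, that the swap $1\leftrightarrow\lambda$ preserves \eqref{eq:min-label} and \eqref{eq:max-label}, and that the identity $|f'(u)-f'(v)|=|f(u)-f(v)|$ with $\lambda$ fixed yields both \eqref{eq:distance-k} and \eqref{eq:cyclic-distance-k}. The paper assumes $f$ satisfies both \eqref{eq:min-label} and \eqref{eq:max-label}, so your MSABL caveat never arises there; your point that $f'$ itself can fail under \eqref{eq:min-label} alone is accurate, but your translation repair shows feasibility of a shifted labeling, not of $f'$.
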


\begin{proof}
First, since 
\[1\le f(v)\le\lambda,\]
it follows that
\[
1\le \lambda+1-f(v)\le\lambda,
\]
thus $f'$ assigns every vertex a label in the domain
$\{1,\ldots,\lambda\}$ and satisfies \eqref{eq:labels}.

Moreover,
\[
f(v)=1 \iff f'(v)=\lambda \qquad \text{and} \qquad
f(v)=\lambda \iff f'(v)=1
\]
implies that if $f$ satisfies
\eqref{eq:min-label} and \eqref{eq:max-label},
then so does $f'$.

For every edge $\{u,v\}\in E$,
\[
|f'(u)-f'(v)| = |(\lambda+1-f(u))-(\lambda+1-f(v))| = |f(u)-f(v)|.
\]
Therefore, \eqref{eq:distance-k} is preserved.

Likewise,
\[
\lambda-|f'(u)-f'(v)|=\lambda-|f(u)-f(v)|,
\]
which implies
\[
\min\!\left(|f'(u)-f'(v)|,\,\lambda-|f'(u)-f'(v)|\right)=\min\!\left(|f(u)-f(v)|,\,\lambda-|f(u)-f(v)|\right).
\]
Hence, \eqref{eq:cyclic-distance-k} is also preserved.

Therefore, $f'$ is feasible whenever $f$ is feasible.
\end{proof}

To eliminate symmetric solutions, we incorporate \eqref{eq:symmetry-breaking}. Specifically, let $v$ be the vertex with the highest degree (and smallest index if tied) in graph $G = (V, E)$. \eqref{eq:symmetry-breaking} restricts that $v$ cannot be assigned any label greater than $\lambda/2$,

\begin{align}
    \bigwedge_{l = \lfloor \lambda/2 \rfloor + 1}^{\lambda} x_{v,l} = 0
    \label{eq:symmetry-breaking} \tag{SYM-BREAK}
\end{align}

This constraint preserves correctness because, for every pair of symmetric labelings $\{f,f'\}$, at least one of them assigns vertex $v$ a label no greater than $\lfloor\lambda/2\rfloor$. Therefore, by restricting the label of vertex $v$ to at most $\lfloor\lambda/2\rfloor$, at least one representative of every symmetric pair remains feasible. Consequently, \eqref{eq:symmetry-breaking} reduces the search space without removing any feasible spans or affecting the optimal solution.

Additionally, the highest-degree vertex is chosen because it participates in the highest number of (cyclic) distance constraints. Restricting the labeling of such a vertex typically propagates more variable assignments during solving, making the symmetry breaking constraint more effective in practice.

\subsection{SAT encoding for ILP model}

This section presents the SAT implementation of the proposed ILP model. First, constraints \eqref{eq:distance-k} and \eqref{eq:cyclic-distance-k} are encoded using a block decomposition technique and Sequential Counter encoding \cite{sinz2005towards}, as used by Hieu et al. on the Antibandwidth \cite{truong2025sequential} and Cyclic Antibandwidth problem \cite{xuan2026solving}. Constraint \eqref{eq:labels} is then encoded by reusing expressions taken from (cyclic) distance encoding. The remaining constraints \eqref{eq:min-label}, \eqref{eq:max-label}, and \eqref{eq:symmetry-breaking} are encoded directly from the corresponding ILP formulations.

\subsubsection{SAT encoding for \eqref{eq:distance-k} and \eqref{eq:cyclic-distance-k}}

To encode \eqref{eq:distance-k} and \eqref{eq:cyclic-distance-k}, we employ the encoding approach proposed by Hieu et al.~\cite{truong2025sequential, xuan2026solving}. In particular, for each edge $\{u,v\}$, each AMO constraint in \eqref{eq:distance-k} (resp. \eqref{eq:cyclic-distance-k}) is decomposed according to Proposition~\ref{prop:decompose_AMO_constraint} into two smaller AMO constraints, each involving variables associated with only one of the two vertices $u$ and $v$, together with an OR clause between two At-Most-Zero (AMZ) constraints. The resulting AMO constraints associated with each vertex form two sequence constraints, which are also referred to as Staircase AMO constraints \cite{fazekas2020duplex, truong2025sequential} (resp. Cyclic Ladder constraints \cite{xuan2026solving})\footnotemark.

\footnotetext{Since Staircase AMO and Cyclic Ladder constraints have the same underlying structure, we refer to both as Ladder AMO constraints throughout the remainder of this paper.}

\begin{proposition}
    \label{prop:decompose_AMO_constraint}
    The constraint $x_{1} + x_{2} + \ldots + x_{n} \leq 1$ holds if and only if for all \(i\) such that \(1 \leq i < n\) :
    \[(x_{1} + \ldots + x_{i}\leq 1)\wedge (x_{i+1} + \ldots + x_{n}\leq 1)\]
    \[\wedge (x_{1} + \ldots + x_{i}\leq 0 \vee x_{i+1} + \ldots + x_{n}\leq 0)\]
\end{proposition}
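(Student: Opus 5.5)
We prove both directions directly, working throughout with $0/1$ variables $x_1,\ldots,x_n$ (so every partial sum is a nonnegative integer). Fix an arbitrary split index $i$ with $1 \le i < n$ and write
\[
A = x_1 + \cdots + x_i, \qquad B = x_{i+1} + \cdots + x_n .
\]
Then $A + B = x_1 + \cdots + x_n$ with $A, B \in \mathbb{Z}_{\ge 0}$. The whole statement reduces to the elementary fact that, for nonnegative integers,
\[
A + B \le 1 \iff (A \le 1) \wedge (B \le 1) \wedge (A \le 0 \vee B \le 0).
\]

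\emph{Forward direction.} Assume $A + B \le 1$. Since $B \ge 0$ we get $A \le A + B \le 1$, and symmetrically $B \le 1$. For the disjunction, suppose both $A \ge 1$ and $B \ge 1$. Then $A + B \ge 2$, a contradiction. Hence $A \le 0$ or $B \le 0$. Because $i$ was arbitrary, the conjunction holds for every $i$ with $1 \le i < n$.

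\emph{Backward direction.} It suffices to use the conditions for a single index, say $i = 1$; this exists as long as $n \ge 2$. By the disjunction, either $A = 0$ or $B = 0$.
\begin{itemize}
\item If $A = 0$, then $A + B = B \le 1$ by the second conjunct.
\item If $B = 0$, then $A + B = A \le 1$ by the first conjunct.
\end{itemize}
In both cases $x_1 + \cdots + x_n \le 1$.

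\emph{Main obstacle.} The only subtlety is the degenerate case $n = 1$. Here the quantifier ranges over an empty set of indices, so the right-hand side is vacuously true, and the left-hand side $x_1 \le 1$ also holds automatically for a Boolean variable. The equivalence is therefore trivially true for $n = 1$. I will note this explicitly so the backward direction's choice of $i = 1$ is justified whenever it is needed.

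I will also remark that the quantification over all $i$ is stronger than necessary: any single split already yields the equivalence. This is exactly what the block decomposition encoding exploits when it chooses split points aligned with block boundaries.
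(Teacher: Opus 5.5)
Your proof is correct and complete. The reduction to the fact that, for nonnegative integers, $A+B\le 1 \iff (A\le 1)\wedge(B\le 1)\wedge(A\le 0\vee B\le 0)$ is exactly what is needed. You correctly use integrality to turn the failure of the disjunction into $A\ge 1$ and $B\ge 1$. The $n=1$ case is correctly vacuous.

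The paper states this proposition without proof; it is treated as an elementary fact carried over from the cited Staircase AMO and Cyclic Ladder encoding work. So there is no competing route to compare against.

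Your closing remark that a single split suffices is worth keeping, and it can be sharpened. Your argument never uses that $A$ and $B$ are a prefix and a suffix of the sequence. It only uses that they sum two disjoint groups of variables that together cover all of them. That is the generality the paper actually relies on, in two ways:
\begin{itemize}
\item It applies the proposition at one block boundary, e.g.\ splitting $x_{u,3}+x_{u,4}+x_{u,5}+x_{u,6}\le 1$ after $x_{u,4}$.
\item It applies it to the edge constraint $\sum_{l=L}^{L+k-1}(x_{u,l}+x_{v,l})\le 1$. There the split is into the $u$-variables and the $v$-variables, which is not a prefix/suffix split of the written order.
\end{itemize}
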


For example, consider an edge $\{u,v\}$ with a maximum label $\lambda$ and a distance value $k$. The \eqref{eq:distance-k} constraint associated with ${u,v}$ is
\[
\bigwedge_{L = 1}^{\lambda - k + 1}\sum_{l=L}^{L+k - 1}(x_{u,l} + x_{v,l})\leq 1.
\]
By Proposition~\ref{prop:decompose_AMO_constraint}, this constraint can be decomposed into
\[
\begin{aligned}
&\bigwedge_{L=1}^{\lambda-k+1}
  \sum_{l=L}^{L+k-1}x_{u,l}\leq 1
\quad\wedge\quad
\bigwedge_{L=1}^{\lambda-k+1}
  \sum_{l=L}^{L+k-1}x_{v,l}\leq 1
\nonumber\\
&\qquad\wedge\quad
\bigwedge_{L=1}^{\lambda-k+1}
\left(
  \sum_{l=L}^{L+k-1}x_{u,l}\leq0
  \vee
  \sum_{l=L}^{L+k-1}x_{v,l}\leq0
\right).
\end{aligned}
\]
Similarly, for a cyclic distance value $k$, the \eqref{eq:cyclic-distance-k} constraint associated with $\{u,v\}$, i.e.,
\[
\bigwedge_{L=1}^{\lambda} \sum_{l=L}^{L +k-1} (x_{u,t} + x_{v,t} ) \leq 1, \qquad t =(l - 1) \bmod \lambda + 1.
\]
can be decomposed into
\[
\begin{aligned}
&\bigwedge_{L=1}^{\lambda}
  \sum_{l=L}^{L+k-1}x_{u,t}\leq 1
\quad\wedge\quad
\bigwedge_{L=1}^{\lambda}
  \sum_{l=L}^{L+k-1}x_{v,t}\leq 1\\
&\qquad\wedge\quad
\bigwedge_{L=1}^{\lambda}
\left(
  \sum_{l=L}^{L+k-1}x_{u,t}\leq0
  \vee
  \sum_{l=L}^{L+k-1}x_{v,t}\leq0
\right).
\end{aligned}
\]

This decomposition has two advantages. First, it decomposes the constraint on the edges into two constraints associated with two individual vertices, allowing the resulting structures to be shared across edges. Second, it reduces the size of each AMO constraint by approximately half, since each decomposed constraint contains variables associated with only one vertex of the edge. Moreover, the OR constraints between two AMZ constraints can be encoded by exploiting the auxiliary variables introduced when encoding the corresponding AMO constraints~\cite{fazekas2020duplex, truong2025sequential, xuan2026solving}.

Rather than encoding each AMO constraint in a Ladder AMO constraint independently, we first partition its sequence of variables into several windows of width $k$. This partition divides each AMO constraint into smaller expressions. By grouping nested expressions induced by these windows, we obtain a set of blocks in which every expression can be constructed incrementally from smaller expressions in the same block.

For example, Figure~\ref{fig:decompose-ladder-distance-10-4} illustrates the block construction of a Ladder AMO constraint obtained from the decomposition of \eqref{eq:distance-k} with $\lambda = 10$ and $k=4$. The sequence of variables of this Ladder AMO constraint is partitioned into three windows, namely $\{x_{u,1}, \ldots, x_{u,4}\}$, $\{x_{u,5}, \ldots, x_{u,8}\}$, and $\{x_{u,9}, x_{u,10}\}$. Grouping the nested expressions induced by these windows results in four distinct blocks. Similarly, Figure~\ref{fig:decompose-ladder-cyclic-distance-10-4} illustrates the block construction for a Ladder AMO constraint obtained from the decomposition of \eqref{eq:cyclic-distance-k} with $\lambda = 10$ and $k = 4$. In this case, the cyclic structure requires four windows and results in six distinct blocks.

\begin{figure*}
\centering
\renewcommand{\arraystretch}{1.25}
\setlength{\tabcolsep}{2.5pt}
\resizebox{0.75\textwidth}{!}{
\begin{tabular}{llllllllllllllllllll}
\cline{1-7}
\multicolumn{1}{|l}{$x_{u,1}$} & + & $x_{u,2}$ & + & $x_{u,3}$ & + & \multicolumn{1}{l|}{$x_{u,4}$} &  &  &  &  &  &  &  &  &  &  &  &  & $\leq 1$ \\ \cline{1-2} \cline{9-9}
 & \multicolumn{1}{l|}{} & $x_{u,2}$ & + & $x_{u,3}$ & + & \multicolumn{1}{l|}{$x_{u,4}$} & \multicolumn{1}{l|}{+} & \multicolumn{1}{l|}{$x_{u,5}$} &  &  &  &  &  &  &  &  &  &  & $\leq 1$ \\ \cline{3-4} \cline{10-11}
 &  &  & \multicolumn{1}{l|}{} & $x_{u,3}$ & + & \multicolumn{1}{l|}{$x_{u,4}$} & \multicolumn{1}{l|}{+} & $x_{u,5}$ & + & \multicolumn{1}{l|}{$x_{u,6}$} &  &  &  &  &  &  &  &  & $\leq 1$ \\ \cline{5-6} \cline{12-13}
 &  &  &  &  & \multicolumn{1}{l|}{} & \multicolumn{1}{l|}{$x_{u,4}$} & \multicolumn{1}{l|}{+} & $x_{u,5}$ & + & $x_{u,6}$ & + & \multicolumn{1}{l|}{$x_{u,7}$} &  &  &  &  &  &  & $\leq 1$ \\ \cline{7-7} \cline{14-15}
 &  &  &  &  &  &  & \multicolumn{1}{l|}{} & $x_{u,5}$ & + & $x_{u,6}$ & + & $x_{u,7}$ & + & \multicolumn{1}{l|}{$x_{u,8}$} &  &  &  &  & $\leq 1$ \\ \cline{9-15}
 &  &  &  &  &  &  &  &  &  &  &  &  &  &  &  &  &  &  &  \\ \cline{9-15}
 &  &  &  &  &  &  & \multicolumn{1}{l|}{} & $x_{u,5}$ & + & $x_{u,6}$ & + & $x_{u,7}$ & + & \multicolumn{1}{l|}{$x_{u,8}$} &  &  &  &  & $\leq 1$ \\ \cline{9-10} \cline{17-17}
 &  &  &  &  &  &  &  &  & \multicolumn{1}{l|}{} & $x_{u,6}$ & + & $x_{u,7}$ & + & \multicolumn{1}{l|}{$x_{u,8}$} & \multicolumn{1}{l|}{+} & \multicolumn{1}{l|}{$x_{u,9}$} &  &  & $\leq 1$ \\ \cline{11-12} \cline{18-19}
 &  &  &  &  &  &  &  &  &  &  & \multicolumn{1}{l|}{} & $x_{u,7}$ & + & \multicolumn{1}{l|}{$x_{u,8}$} & \multicolumn{1}{l|}{+} & $x_{u,9}$ & + & \multicolumn{1}{l|}{$x_{u,10}$} & $\leq 1$ \\ \cline{13-15} \cline{17-19}
\end{tabular}
}
\caption{Ladder AMO constraint obtained from decomposition of \eqref{eq:distance-k} with $\lambda = 10$ and $k = 4$.}
\label{fig:decompose-ladder-distance-10-4}
\end{figure*}

\begin{figure*}[ht!]
\centering
\renewcommand{\arraystretch}{1.25}
\setlength{\tabcolsep}{2.5pt}
\resizebox{\textwidth}{!}{
\begin{tabular}{llllllllllllllllllllllllll}
\cline{1-7}
\multicolumn{1}{|l}{$x_{u,1}$} & + & $x_{u,2}$ & + & $x_{u,3}$ & + & \multicolumn{1}{l|}{$x_{u,4}$} &  &  &  &  &  &  &  &  &  &  &  &  &  &  &  &  &  &  & $\leq 1$ \\ \cline{1-2} \cline{9-9}
 & \multicolumn{1}{l|}{} & $x_{u,2}$ & + & $x_{u,3}$ & + & \multicolumn{1}{l|}{$x_{u,4}$} & \multicolumn{1}{l|}{+} & \multicolumn{1}{l|}{$x_{u,5}$} &  &  &  &  &  &  &  &  &  &  &  &  &  &  &  &  & $\leq 1$ \\ \cline{3-4} \cline{10-11}
 &  &  & \multicolumn{1}{l|}{} & $x_{u,3}$ & + & \multicolumn{1}{l|}{$x_{u,4}$} & \multicolumn{1}{l|}{+} & $x_{u,5}$ & + & \multicolumn{1}{l|}{$x_{u,6}$} &  &  &  &  &  &  &  &  &  &  &  &  &  &  & $\leq 1$ \\ \cline{5-6} \cline{12-13}
 &  &  &  &  & \multicolumn{1}{l|}{} & \multicolumn{1}{l|}{$x_{u,4}$} & \multicolumn{1}{l|}{+} & $x_{u,5}$ & + & $x_{u,6}$ & + & \multicolumn{1}{l|}{$x_{u,7}$} &  &  &  &  &  &  &  &  &  &  &  &  & $\leq 1$ \\ \cline{7-7} \cline{14-15}
 &  &  &  &  &  &  & \multicolumn{1}{l|}{} & $x_{u,5}$ & + & $x_{u,6}$ & + & $x_{u,7}$ & + & \multicolumn{1}{l|}{$x_{u,8}$} &  &  &  &  &  &  &  &  &  &  & $\leq 1$ \\ \cline{9-15}
 &  &  &  &  &  &  &  &  &  &  &  &  &  &  &  &  &  &  &  &  &  &  &  &  &  \\ \cline{9-15}
 &  &  &  &  &  &  & \multicolumn{1}{l|}{} & $x_{u,5}$ & + & $x_{u,6}$ & + & $x_{u,7}$ & + & \multicolumn{1}{l|}{$x_{u,8}$} &  &  &  &  &  &  &  &  &  &  & $\leq 1$ \\ \cline{9-10} \cline{17-17}
 &  &  &  &  &  &  &  &  & \multicolumn{1}{l|}{} & $x_{u,6}$ & + & $x_{u,7}$ & + & \multicolumn{1}{l|}{$x_{u,8}$} & \multicolumn{1}{l|}{+} & \multicolumn{1}{l|}{$x_{u,9}$} &  &  &  &  &  &  &  &  & $\leq 1$ \\ \cline{11-12} \cline{18-19}
 &  &  &  &  &  &  &  &  &  &  & \multicolumn{1}{l|}{} & $x_{u,7}$ & + & \multicolumn{1}{l|}{$x_{u,8}$} & \multicolumn{1}{l|}{+} & $x_{u,9}$ & + & \multicolumn{1}{l|}{$x_{u,10}$} &  &  &  &  &  &  & $\leq 1$ \\ \cline{13-14} \cline{20-21}
 &  &  &  &  &  &  &  &  &  &  &  &  & \multicolumn{1}{l|}{} & \multicolumn{1}{l|}{$x_{u,8}$} & \multicolumn{1}{l|}{+} & $x_{u,9}$ & + & $x_{u,10}$ & + & \multicolumn{1}{l|}{$x_{u,1}$} &  &  &  &  & $\leq 1$ \\ \cline{15-15} \cline{22-23}
 &  &  &  &  &  &  &  &  &  &  &  &  &  &  & \multicolumn{1}{l|}{} & $x_{u,9}$ & + & $x_{u,10}$ & + & $x_{u,1}$ & + & \multicolumn{1}{l|}{$x_{u,2}$} &  &  & $\leq 1$ \\ \cline{17-23}
 &  &  &  &  &  &  &  &  &  &  &  &  &  &  &  &  &  &  &  &  &  &  &  &  &  \\ \cline{17-23}
 &  &  &  &  &  &  &  &  &  &  &  &  &  &  & \multicolumn{1}{l|}{} & $x_{u,9}$ & + & $x_{u,10}$ & + & $x_{u,1}$ & + & \multicolumn{1}{l|}{$x_{u,2}$} &  &  & $\leq 1$ \\ \cline{17-18} \cline{25-25}
 &  &  &  &  &  &  &  &  &  &  &  &  &  &  &  &  & \multicolumn{1}{l|}{} & $x_{u,10}$ & + & $x_{u,1}$ & + & \multicolumn{1}{l|}{$x_{u,2}$} & \multicolumn{1}{l|}{+} & \multicolumn{1}{l|}{$x_{u,3}$} & $\leq 1$ \\ \cline{19-23} \cline{25-25}
\end{tabular}
}
\caption{Ladder AMO constraint obtained from decomposition of \eqref{eq:cyclic-distance-k} with $\lambda = 10$ and $k = 4$.}
\label{fig:decompose-ladder-cyclic-distance-10-4}
\end{figure*}

Since every expression can be constructed incrementally from a smaller expression in the same block, each block can be encoded using a single Sequential Counter encoding~\cite{sinz2005towards}. Specifically, let $x_{i,j}$ denote the $j^{th}$ variable in the construction order of the expressions in block $i$, and $R_{i,j}$ denote an auxiliary variable that is $true$ if and only if at least one of the first $j$ variables in block $i$ is $true$. The Sequential Counter encoding for block $i$ consists of the four sets of constraints given in \eqref{eq:block_encoding_1}-\eqref{eq:block_encoding_4}. The first three sets of constraints, \eqref{eq:block_encoding_1}-\eqref{eq:block_encoding_3}, ensure that each auxiliary variable $R_{i,j}$ correctly represents whether at least one of the first $j$ variables in block $i$ is $true$. The fourth set of constraints, \eqref{eq:block_encoding_4}, enforces the AMO condition by preventing $x_{i,j}$ from being $true$ when a previously constructed variable in the same block is already $true$.

\begin{minipage}[t]{0.4\textwidth}
\begin{equation}
    \label{eq:block_encoding_1}
    \bigwedge_{j=2}^{k}x_{i,j}\to R_{i,j}
\end{equation}

\begin{equation}
    \label{eq:block_encoding_2}
    \bigwedge_{j=2}^{k} R_{i,j-1} \to R_{i,j}
\end{equation}
\end{minipage}
\hfill
\begin{minipage}[t]{0.4\textwidth}
\begin{equation}
    \label{eq:block_encoding_3}
    \bigwedge_{j=2}^{k}\neg x_{i,j}\wedge\neg R_{i,j-1}\to \neg R_{i,j}
\end{equation}

\begin{equation}
    \label{eq:block_encoding_4}
    \bigwedge_{j=2}^{k} x_{i,j} \to \neg R_{i,j-1}
\end{equation}
\end{minipage}

After encoding all blocks, we connect adjacent blocks to reconstruct the original Ladder AMO constraint. This connection can again be derived from Proposition~\ref{prop:decompose_AMO_constraint}. For example, the constraint
\[x_{u,3} + x_{u,4} + x_{u,5} + x_{u,6} \leq 1\]
can be decomposed as
\[x_{u,3} + x_{u,4} \leq 1 \wedge  x_{u,5} + x_{u,6} \leq 1\]
\[\wedge (x_{u,3} + x_{u,4} \leq 0 \vee  x_{u,5} + x_{u,6} \leq 0).\]
The first two AMO constraints are already enforced by the encoding of the corresponding blocks, while the AMZ constraints in the remaining OR clause can be represented by the negation of the corresponding auxiliary variable $R_{i,j}$, i.e., $\neg R_{i,j}$.

\subsubsection{SAT Encoding for \eqref{eq:labels}}

Constraint~\eqref{eq:labels} can be encoded directly as an Exactly-One (EO) constraint over all label variables of a vertex. However, such an encoding ignores the auxiliary variables and expressions that have already been introduced when encoding \eqref{eq:distance-k} and \eqref{eq:cyclic-distance-k}. To obtain a more compact CNF formula, we reuse these existing expressions to encode \eqref{eq:labels}.

For example, let $\lambda=10$ and $k = 4$. Constraint~\eqref{eq:labels} for a vertex $u$ requires
\[
x_{u,1}+x_{u,2}+\cdots+x_{u,10}=1.
\]
Instead of encoding this EO constraint directly, we partition the variables into three consecutive groups,
\[
x_{u,1}+x_{u,2}+x_{u,3}+x_{u,4}=1,
\]
\[
x_{u,5}+x_{u,6}+x_{u,7}+x_{u,8}=1,
\]
and
\[
x_{u,9}+x_{u,10}=1.
\]
The expressions corresponding to these consecutive groups have already been encoded during the construction of the Ladder AMO constraints for \eqref{eq:distance-k} and \eqref{eq:cyclic-distance-k}, as illustrated in Figure~\ref{fig:decompose-ladder-distance-10-4} and \ref{fig:decompose-ladder-cyclic-distance-10-4}. Consequently, the EO constraint can be encoded by reusing the existing auxiliary variables instead of introducing a new encoding over all label variables.

\subsubsection{SAT Encoding for the other formulations}

The remaining two formulations in the ILP model, including \eqref{eq:min-label} and \eqref{eq:max-label}, and constraints \eqref{eq:symmetry-breaking} are encoded directly into SAT as follows.

\begin{align}
    \sum_{v\in V}x_{v,1}\ge1 \qquad \rightarrow \qquad \bigvee_{v\in V}x_{v,1}
    \tag{MIN-LABEL}
\end{align}

\begin{align}
    \sum_{v\in V}x_{v,\lambda}\ge1 \qquad \rightarrow \qquad \bigvee_{v\in V}x_{v,\lambda}
    \tag{MAX-LABEL}
\end{align}

\begin{align}
   \bigwedge_{l = \lfloor \lambda/2 \rfloor + 1}^{\lambda} x_{v,l} = 0 \qquad \rightarrow \qquad \bigwedge_{l = \lfloor \lambda/2 \rfloor + 1}^{\lambda} \neg x_{v,l}
    \tag{SYM-BREAK}
\end{align}

\section{Experimental Setup}
\label{sec:experimental-setup}

\subsection{Implementation Details}

We implemented a SAT-based framework, referred to as SAT-MSL, for solving the Minimum Span Antibandwidth Labeling (MSABL) and Minimum Span Cyclic Antibandwidth Labeling (MSCABL) problems. The framework is developed in C++ and uses CaDiCaL \cite{biere2024cadical}, version 2.2.1\footnotemark, as the underlying SAT solver.
\footnotetext{https://github.com/arminbiere/cadical/releases/tag/rel-2.2.1}
To control the size of the generated CNF formulas, clauses containing more than 10 literals are decomposed into clauses of at most 10 literals using Tseitin's transformation \cite{tseitin1983complexity}. The framework supports two SAT solving strategies, i.e., parallel SAT solving (Algorithm~\ref{alg:parallel_sat}) and incremental SAT solving (Algorithm~\ref{alg:incremental_sat}). Parallel SAT solving is applicable to both MSABL and MSCABL, whereas incremental SAT solving is employed only for MSABL.

\subsection{Benchmark Dataset}

We evaluate our proposed approaches on a benchmark consisting of 24 graphs selected from the Harwell-Boeing Sparse Matrix Collection~\cite{hb-dataset}. The benchmark contains 12 small graphs and 12 large graphs. Table~\ref{tab:benchmarkData} reports the main information for these instances, including the number of vertices $|V|$, the number of edges $|E|$, and the best-known values obtained from the literature~\cite{lozano2023population, truong2025sequential, xuan2026solving} for the Antibandwidth (ABP) and Cyclic Antibandwidth (CABP) problems, denoted by $ABP_{best}$ and $CABP_{best}$, respectively.

\begin{table*}[ht!]
\centering
\renewcommand{\arraystretch}{1.5}
\caption{Harwell-Boeing Sparse Matrix benchmark.}
\resizebox{\textwidth}{!}{
\begin{tabular}{lccccllcccc}
\multicolumn{5}{c}{\textbf{Small graphs}} &  & \multicolumn{5}{c}{\textbf{Large graphs}}  \\ \cline{1-5} \cline{7-11} 
\textbf{Graph} & \textbf{$|$V$|$} & \textbf{$|$E$|$} & \textbf{$ABP_{best}$} & \textbf{$CABP_{best}$} & \qquad & \textbf{Graph} & \textbf{$|$V$|$} & \textbf{$|$E$|$} & \textbf{$ABP_{best}$} & \textbf{$CABP_{best}$}  \\ \cline{1-5} \cline{7-11} 
A-pores\_1      & 30    & 103   & 6     & 6     & \qquad & M-bcsstk06   & 420   & 3720  & 34    & 33    \\ 
B-ibm32         & 32    & 90    & 9     & 8     & \qquad & N-bcsstk07   & 420   & 3720  & 34    & 33    \\ 
C-bcspwr01      & 39    & 46    & 17    & 13    & \qquad & O-impcol\_d  & 425   & 1267  & 120   & 105   \\ 
D-bcsstk01      & 48    & 176   & 9     & 8     & \qquad & P-can\_\_445 & 445   & 1682  & 90    & 87    \\ 
E-bcspwr02      & 49    & 59    & 21    & 16    & \qquad & Q-494\_bus   & 494   & 586   & 227   & 164   \\ 
F-curtis54      & 54    & 124   & 13    & 10    & \qquad & R-dwt\_\_503 & 503   & 2762  & 63    & 62    \\ 
G-will57        & 57    & 127   & 13    & 11    & \qquad & S-sherman4   & 546   & 1341  & 261   & 258   \\ 
H-impcol\_b     & 59    & 281   & 8     & 7     & \qquad & T-dwt\_\_592 & 592   & 2256  & 113   & 113   \\ 
I-ash85         & 85    & 219   & 23    & 21    & \qquad & U-662\_bus   & 662   & 906   & 220   & 165   \\ 
J-nos4          & 100   & 247   & 35    & 32    & \qquad & V-nos6       & 675   & 1290  & 329   & 328   \\ 
K-dwt\_\_234    & 117   & 162   & 51    & 46    & \qquad & W-685\_bus   & 685   & 1282  & 136   & 114   \\ 
L-bcspwr03      & 118   & 179   & 39    & 29    & \qquad & X-can\_\_715 & 715   & 2975  & 116   & 101   \\ \cline{1-5} \cline{7-11} 
\end{tabular}
}
\label{tab:benchmarkData}
\end{table*}

For each graph, we construct MSABL and MSCABL instances using the corresponding $ABP_{best}$ and $CABP_{best}$ as $k$. Since a feasible ABP (resp. CABP) labeling assigns each vertex a distinct label from $\{1,\ldots,|V|\}$, such a labeling is also a feasible solution for the corresponding MSABL (resp. MSCABL) instance. Consequently, we set the upper bound on the largest label $UB_{\lambda}$ to $|V|$. Therefore, the upper bound of label span is $ UB_{\mathrm{span}} = UB_\lambda - 1 = |V|-1$.

We obtain a simple minimum label span $LB_{span}$ from the distance constraint. Consider an edge $\{u,v\}$ and suppose that $u$ is assigned the lowest label $1$. To satisfy the required distance $k$, the label assigned to $v$ must be at least $k+1$. Hence, the span of any feasible labeling is at least $k$, and we use $LB_{\mathrm{span}} = k$. This bound is used for both MSABL and MSCABL instances.

To analyze the behavior of the proposed methods under different levels of (cyclic) distance tightness, we further extend the benchmark by scaling the (cyclic) distance value, as well as $LB_{span}$ and $UB_{span}$. 
In addition to the original benchmark, corresponding to a coefficient of $1$, we consider four additional coefficients, including $\{0.5,0.75,1.25,1.5\}$. 
For each coefficient, the (cyclic) distance value and the bounds are scaled accordingly. 
Thus, each graph gives rise to five instances for each coefficient variant, resulting in a total of $24\times5=120$ instances for MSABL and another 120 instances for MSCABL. 

Tables~\ref{tab:ab-value-and-bound-of-span-table} and~\ref{tab:cab-value-and-bound-of-span-table} summarize the resulting parameters for MSABL and MSCABL instances, respectively. For each graph and each coefficient, the tables report the scaled (cyclic) distance value together with the corresponding lower and upper bounds on the span.

\begin{table*}[ht!]
\renewcommand{\arraystretch}{1.5}
\setlength{\tabcolsep}{2pt}
\centering
\caption{Distance and bounds of span for each graph with corresponding coefficient.}
\label{tab:ab-value-and-bound-of-span-table}
\resizebox{\textwidth}{!}{
\begin{tabular}{lcccp{0.25cm}cccp{0.25cm}cccp{0.25cm}cccp{0.25cm}ccc}
\hline
\multicolumn{1}{c}{\multirow{2}{*}{\textbf{Graph}}} & \multicolumn{3}{c}{\textbf{1}} & \textbf{} & \multicolumn{3}{c}{\textbf{0.5}} & \textbf{} & \multicolumn{3}{c}{\textbf{0.75}} & \textbf{} & \multicolumn{3}{c}{\textbf{1.25}} & \textbf{} & \multicolumn{3}{c}{\textbf{1.5}} \\ \cline{2-20} 
\multicolumn{1}{c}{} & \textbf{Value} & \textbf{LB\textsubscript{span}} & \textbf{UB\textsubscript{span}} & \textbf{} & \textbf{Value} & \textbf{LB\textsubscript{span}} & \textbf{UB\textsubscript{span}} & \textbf{} & \textbf{Value} & \textbf{LB\textsubscript{span}} & \textbf{UB\textsubscript{span}} & \textbf{} & \textbf{Value} & \textbf{LB\textsubscript{span}} & \textbf{UB\textsubscript{span}} & \textbf{} & \textbf{Value} & \textbf{LB\textsubscript{span}} & \textbf{UB\textsubscript{span}} \\ \hline
A-pores\_1 & 6 & 6 & 29 &  & 3 & 3 & 14 &  & 4 & 4 & 21 &  & 7 & 7 & 37 &  & 9 & 9 & 44 \\
B-ibm32 & 9 & 9 & 31 &  & 4 & 4 & 15 &  & 6 & 6 & 23 &  & 11 & 11 & 39 &  & 13 & 13 & 47 \\
C-bcspwr01 & 17 & 17 & 38 &  & 8 & 8 & 18 &  & 12 & 12 & 28 &  & 21 & 21 & 48 &  & 25 & 25 & 58 \\
D-bcsstk01 & 9 & 9 & 47 &  & 4 & 4 & 23 &  & 6 & 6 & 35 &  & 11 & 11 & 59 &  & 13 & 13 & 71 \\
E-bcspwr02 & 21 & 21 & 48 &  & 10 & 10 & 23 &  & 15 & 15 & 35 &  & 26 & 26 & 61 &  & 31 & 31 & 73 \\
F-curtis54 & 13 & 13 & 53 &  & 6 & 6 & 26 &  & 9 & 9 & 39 &  & 16 & 16 & 67 &  & 19 & 19 & 80 \\
G-will57 & 13 & 13 & 56 &  & 6 & 6 & 27 &  & 9 & 9 & 41 &  & 16 & 16 & 71 &  & 19 & 19 & 85 \\
H-impcol\_b & 8 & 8 & 58 &  & 4 & 4 & 28 &  & 6 & 6 & 43 &  & 10 & 10 & 73 &  & 12 & 12 & 88 \\
I-ash85 & 23 & 23 & 84 &  & 11 & 11 & 41 &  & 17 & 17 & 62 &  & 28 & 28 & 106 &  & 34 & 34 & 127 \\
J-nos4 & 35 & 35 & 99 &  & 17 & 17 & 49 &  & 26 & 26 & 74 &  & 43 & 43 & 124 &  & 52 & 52 & 149 \\
K-dwt\_\_234 & 51 & 51 & 116 &  & 25 & 25 & 57 &  & 38 & 38 & 86 &  & 63 & 63 & 146 &  & 76 & 76 & 175 \\
L-bcspwr03 & 39 & 39 & 117 &  & 19 & 19 & 58 &  & 29 & 29 & 87 &  & 48 & 48 & 147 &  & 58 & 58 & 177 \\
M-bcsstk06 & 34 & 34 & 419 &  & 17 & 17 & 209 &  & 25 & 25 & 314 &  & 42 & 42 & 524 &  & 51 & 51 & 629 \\
N-bcsstk07 & 34 & 34 & 419 &  & 17 & 17 & 209 &  & 25 & 25 & 314 &  & 42 & 42 & 524 &  & 51 & 51 & 629 \\
O-impcol\_d & 120 & 120 & 424 &  & 60 & 60 & 211 &  & 90 & 90 & 317 &  & 150 & 150 & 531 &  & 180 & 180 & 637 \\
P-can\_\_445 & 90 & 90 & 444 &  & 45 & 45 & 221 &  & 67 & 67 & 332 &  & 112 & 112 & 556 &  & 135 & 135 & 667 \\
Q-494\_bus & 227 & 227 & 493 &  & 113 & 113 & 246 &  & 170 & 170 & 369 &  & 283 & 283 & 617 &  & 340 & 340 & 740 \\
R-dwt\_\_503 & 63 & 63 & 502 &  & 31 & 31 & 250 &  & 47 & 47 & 376 &  & 78 & 78 & 628 &  & 94 & 94 & 754 \\
S-sherman4 & 261 & 261 & 545 &  & 130 & 130 & 272 &  & 195 & 195 & 408 &  & 326 & 326 & 682 &  & 391 & 391 & 818 \\
T-dwt\_\_592 & 113 & 113 & 591 &  & 56 & 56 & 295 &  & 84 & 84 & 443 &  & 141 & 141 & 739 &  & 169 & 169 & 887 \\
U-662\_bus & 220 & 220 & 661 &  & 110 & 110 & 330 &  & 165 & 165 & 495 &  & 275 & 275 & 827 &  & 330 & 330 & 992 \\
V-nos6 & 329 & 329 & 674 &  & 164 & 164 & 336 &  & 246 & 246 & 505 &  & 411 & 411 & 843 &  & 493 & 493 & 1012 \\
W-685\_bus & 136 & 136 & 684 &  & 68 & 68 & 341 &  & 102 & 102 & 512 &  & 170 & 170 & 856 &  & 204 & 204 & 1027 \\
X-can\_\_715 & 116 & 116 & 714 &  & 58 & 58 & 356 &  & 87 & 87 & 535 &  & 145 & 145 & 893 &  & 174 & 174 & 1072 \\ \hline
\end{tabular}
}
\end{table*}

\begin{table*}[ht!]
\renewcommand{\arraystretch}{1.5}
\setlength{\tabcolsep}{2pt}
\centering
\caption{Cyclic distance and bounds of span for each graph with corresponding coefficient.}
\label{tab:cab-value-and-bound-of-span-table}
\resizebox{\textwidth}{!}{
\begin{tabular}{lcccp{0.25cm}cccp{0.25cm}cccp{0.25cm}cccp{0.25cm}ccc}
\hline
\multicolumn{1}{c}{\multirow{2}{*}{\textbf{Graph}}} & \multicolumn{3}{c}{\textbf{1}} & \textbf{} & \multicolumn{3}{c}{\textbf{0.5}} & \textbf{} & \multicolumn{3}{c}{\textbf{0.75}} & \textbf{} & \multicolumn{3}{c}{\textbf{1.25}} & \textbf{} & \multicolumn{3}{c}{\textbf{1.5}} \\ \cline{2-20} 
\multicolumn{1}{c}{} & \textbf{Value} & \textbf{LB\textsubscript{span}} & \textbf{UB\textsubscript{span}} & \textbf{} & \textbf{Value} & \textbf{LB\textsubscript{span}} & \textbf{UB\textsubscript{span}} & \textbf{} & \textbf{Value} & \textbf{LB\textsubscript{span}} & \textbf{UB\textsubscript{span}} & \textbf{} & \textbf{Value} & \textbf{LB\textsubscript{span}} & \textbf{UB\textsubscript{span}} & \textbf{} & \textbf{Value} & \textbf{LB\textsubscript{span}} & \textbf{UB\textsubscript{span}} \\ \hline
A-pores\_1 & 6 & 6 & 29 &  & 3 & 3 & 14 &  & 4 & 4 & 21 &  & 7 & 7 & 37 &  & 9 & 9 & 44 \\
B-ibm32 & 8 & 8 & 31 &  & 4 & 4 & 15 &  & 6 & 6 & 23 &  & 10 & 10 & 39 &  & 12 & 12 & 47 \\
C-bcspwr01 & 13 & 13 & 38 &  & 6 & 6 & 18 &  & 9 & 9 & 28 &  & 16 & 16 & 48 &  & 19 & 19 & 58 \\
D-bcsstk01 & 8 & 8 & 47 &  & 4 & 4 & 23 &  & 6 & 6 & 35 &  & 10 & 10 & 59 &  & 12 & 12 & 71 \\
E-bcspwr02 & 16 & 16 & 48 &  & 8 & 8 & 23 &  & 12 & 12 & 35 &  & 20 & 20 & 61 &  & 24 & 24 & 73 \\
F-curtis54 & 10 & 10 & 53 &  & 5 & 5 & 26 &  & 7 & 7 & 39 &  & 12 & 12 & 67 &  & 15 & 15 & 80 \\
G-will57 & 11 & 11 & 56 &  & 5 & 5 & 27 &  & 8 & 8 & 41 &  & 13 & 13 & 71 &  & 16 & 16 & 85 \\
H-impcol\_b & 7 & 7 & 58 &  & 3 & 3 & 28 &  & 5 & 5 & 43 &  & 8 & 8 & 73 &  & 10 & 10 & 88 \\
I-ash85 & 21 & 21 & 84 &  & 10 & 10 & 41 &  & 15 & 15 & 62 &  & 26 & 26 & 106 &  & 31 & 31 & 127 \\
J-nos4 & 32 & 32 & 99 &  & 16 & 16 & 49 &  & 24 & 24 & 74 &  & 40 & 40 & 124 &  & 48 & 48 & 149 \\
K-dwt\_\_234 & 46 & 46 & 116 &  & 23 & 23 & 57 &  & 34 & 34 & 86 &  & 57 & 57 & 146 &  & 69 & 69 & 175 \\
L-bcspwr03 & 29 & 29 & 117 &  & 14 & 14 & 58 &  & 21 & 21 & 87 &  & 36 & 36 & 147 &  & 43 & 43 & 177 \\
M-bcsstk06 & 33 & 33 & 419 &  & 16 & 16 & 209 &  & 24 & 24 & 314 &  & 41 & 41 & 524 &  & 49 & 49 & 629 \\
N-bcsstk07 & 33 & 33 & 419 &  & 16 & 16 & 209 &  & 24 & 24 & 314 &  & 41 & 41 & 524 &  & 49 & 49 & 629 \\
O-impcol\_d & 105 & 105 & 424 &  & 52 & 52 & 211 &  & 78 & 78 & 317 &  & 131 & 131 & 531 &  & 157 & 157 & 637 \\
P-can\_\_445 & 87 & 87 & 444 &  & 43 & 43 & 221 &  & 65 & 65 & 332 &  & 108 & 108 & 556 &  & 130 & 130 & 667 \\
Q-494\_bus & 164 & 164 & 493 &  & 82 & 82 & 246 &  & 123 & 123 & 369 &  & 205 & 205 & 617 &  & 246 & 246 & 740 \\
R-dwt\_\_503 & 62 & 62 & 502 &  & 31 & 31 & 250 &  & 46 & 46 & 376 &  & 77 & 77 & 628 &  & 93 & 93 & 754 \\
S-sherman4 & 258 & 258 & 545 &  & 129 & 129 & 272 &  & 193 & 193 & 408 &  & 322 & 322 & 682 &  & 387 & 387 & 818 \\
T-dwt\_\_592 & 113 & 113 & 591 &  & 56 & 56 & 295 &  & 84 & 84 & 443 &  & 141 & 141 & 739 &  & 169 & 169 & 887 \\
U-662\_bus & 165 & 165 & 661 &  & 82 & 82 & 330 &  & 123 & 123 & 495 &  & 206 & 206 & 827 &  & 247 & 247 & 992 \\
V-nos6 & 328 & 328 & 674 &  & 164 & 164 & 336 &  & 246 & 246 & 505 &  & 410 & 410 & 843 &  & 492 & 492 & 1012 \\
W-685\_bus & 114 & 114 & 684 &  & 57 & 57 & 341 &  & 85 & 85 & 512 &  & 142 & 142 & 856 &  & 171 & 171 & 1027 \\
X-can\_\_715 & 101 & 101 & 714 &  & 50 & 50 & 356 &  & 75 & 75 & 535 &  & 126 & 126 & 893 &  & 151 & 151 & 1072 \\ \hline
\end{tabular}
}
\end{table*}

The scaling factors allow us to evaluate the algorithms beyond the original problem parameters. In particular, coefficients below $1$ produce smaller (cyclic) distance requirements and narrower search intervals, whereas coefficients above $1$ produce more demanding (cyclic) distance requirements and larger search intervals. This provides a controlled way to examine the scalability and robustness of the proposed SAT-based approaches across different problem scales.

To further investigate the effect of restricting the use of labels, we additionally consider a no-hole constraint for the coefficients 0.5, 0.75, and 1. Originally, labels $1$ and $\lambda$ are required to be used, ensuring that the label span is exactly $\lambda-1$. The no-hole constraint further requires that every label between these two boundary labels be assigned to at least one vertex, i.e., no label within the span may remain unused. We do not impose this constraint for coefficients 1.25 and 1.5, since the corresponding scaled (cyclic) distance may require the largest label to exceed the number of vertices, making it impossible to use every label within the span.

\subsection{Compared Methods}

We compare the proposed SAT-based approach with three well-known optimization-based methods, which are CPLEX\textsubscript{CP}, CPLEX\textsubscript{MIP}, and Gurobi. CPLEX\textsubscript{CP} employs Constraint Programming (CP), whereas CPLEX\textsubscript{MIP} and Gurobi are based on Mixed Integer Programming (MIP). In our implementation, CPLEX\textsubscript{CP} and CPLEX\textsubscript{MIP} use the CP Optimizer and CPLEX Optimizer, respectively, provided by IBM ILOG CPLEX Optimization Studio\footnotemark~version 22.2.0.0,
\footnotetext{\url{https://www.ibm.com/products/ilog-cplex-optimization-studio}}
while Gurobi uses the Gurobi Optimizer\footnotemark~version 13.0.2.
\footnotetext{\url{https://www.gurobi.com/}}
All three approaches model the problem directly using the native constraint modeling capabilities of their respective optimization solvers, with the label span minimized as the objective function.

The optimization models used by these methods are defined as follows. Given an undirected graph $G=(V,E)$, a (cyclic) distance value $k$, and lower/upper bounds on the label span, denoted by $LB_{span}$/$UB_{span}$. For each vertex $v\in V$, let $f(v)$ be an integer variable representing the label assigned to $v$, and $\lambda$ be an integer variable representing the largest label used. The constraints defining a feasible labeling are
\[
\renewcommand{\arraystretch}{1.3}
\begin{array}{lll}
    & \forall v\in V:       & 1 \leq f(v) \leq UB_{span} + 1,\\
    & \forall v\in V:       & \lambda = \max f(v),\\
    &                       & LB_{span} + 1 \leq \lambda \leq UB_{span} + 1,\\
    & \exists v\in V:       & f(v)=1.
\end{array}
\]
The first constraint bounds the label assigned to each vertex based on the upper bound on the span. The second defines $\lambda$ as the largest label used, while the third restricts $\lambda$ according to the given span bounds. The last constraint ensures that label $1$ is used. Consequently, the label span is exactly $\lambda-1$.

We also apply symmetry breaking based on the observation that, for a feasible labeling with the largest label $\lambda$, the symmetric labeling $f'(v)=\lambda+1-f(v)$ is also feasible. Let $v^\star$ be a vertex of maximum degree. We restrict the label assigned to this vertex to the lower half of the label domain by imposing
\[
2f(v^\star)\leq UB_{span} + 2.
\]

Using these constraints, the optimization model for the Minimum Span Antibandwidth Labeling problem is
\[
\renewcommand{\arraystretch}{1.3}
\begin{array}{lll}
\min \quad          & \lambda - 1           & \\
\text{s.t.}\quad    & \forall v\in V:       & 1 \leq f(v) \leq UB_{span} + 1,\\
                    & \forall v\in V:       & \lambda = \max f(v),\\
                    &                       & LB_{span} + 1 \leq \lambda \leq UB_{span} + 1,\\
                    & \exists v\in V:       & f(v)=1,\\
                    &                       & 2f(v^\star)\leq UB_{span} + 2,\\
                    & \forall \{u,v\}\in E: & |f(u)-f(v)| \geq k.
\end{array}
\]
For the Minimum Span Cyclic Antibandwidth Labeling problem, the corresponding optimization model is
\[
\renewcommand{\arraystretch}{1.3}
\begin{array}{lll}
\min \quad          & \lambda - 1           & \\
\text{s.t.}\quad    & \forall v\in V:       & 1 \leq f(v) \leq UB_{span} + 1,\\
                    & \forall v\in V:       & \lambda = \max f(v),\\
                    &                       & LB_{span} + 1 \leq \lambda \leq UB_{span} + 1,\\
                    & \exists v\in V:       & f(v)=1,\\
                    &                       & 2f(v^\star)\leq UB_{span} + 2,\\
                    & \forall \{u,v\}\in E: & \min\{|f(u)-f(v)|,\lambda-|f(u)-f(v)|\}\geq k.
\end{array}
\]

\subsection{Experimental Environment and Metrics}

All experiments are conducted on a virtual machine provided by the Google Cloud Platform\footnotemark, running Debian 12 (Bookworm) on a 64-bit architecture. 
\footnotetext{https://console.cloud.google.com}
The machine uses the \texttt{e2-highmem-8} machine type and is equipped with 4 physical cores, 8 virtual CPUs, and 64 GB of memory, with no GPU. Each instance is solved with a time limit of 1800 seconds and a memory limit of 45 GB. CPLEX\textsubscript{CP}, CPLEX\textsubscript{MIP}, and Gurobi are run with their default configurations. Given the hardware configuration described above, these approaches can use up to 8 concurrent processes. For a fair comparison, SAT-MSL with the parallel strategy is also configured to use up to 8 concurrent processes.

We perform three types of comparisons to evaluate the performance of the considered approaches. First, we compare the methods based on three primary performance metrics, including the number of solved instances ($\#Solved$), the number of instances for which an optimal span is obtained ($\#Optimal$), and the number of instances for which a best-known span is obtained ($\#Best$). If two or more approaches obtain the same span, but only some of them prove its optimality, only those that prove optimality are credited with the best-known span. Furthermore, if none of the approaches finds a feasible span within the time limit, the instance is not considered to have a best-known result for any approach.

Second, we use the Friedman test~\cite{friedman1937use} to compare the average rankings ($\#Avg.Rank$) of the considered methods across the test instances. This non-parametric test is used to determine whether there are statistically significant differences among the overall performances of the methods.

Finally, we perform pairwise comparisons between the proposed and competing methods using the Wilcoxon signed-rank test~\cite{garcia2009study}. This test is used to assess whether the performance difference between any two methods is statistically significant. We also note that, in instances where an approach fails to find any feasible span within the time limit, we use (UB + 1) as the approach's result in the Wilcoxon test. Besides, when an approach proves a span to be optimal, we consider (span - 1) as its result, thereby distinguishing a proven optimal result from the same span obtained without a proof of optimality.

Together, these three comparisons provide complementary views of the computational performance of the considered approaches. They not only cover the number of instances successfully solved and the quality of the obtained solutions, but also the statistical significance of the observed performance differences.

\section{Experimental Results}
\label{sec:experimental-results}

This section presents the experimental results of all approaches based on two variants of the proposed optimization model. The first is the base model, which does not impose the no-hole constraint, while the second additionally incorporates the no-hole constraint, requiring every label within the span to be used. The results reported in this section provide an overall summary of the experimental findings, while detailed results for individual instances are provided in the Appendix~\ref{sec:detailed-experiment-results}.

\subsection{Experimental Results on the Base Model}

Table~\ref{tab:experimental-results-summary-base-model} summarizes the experimental results for both MSABL and MSCABL using the base model, i.e., the model without no-hole constraint. The results are reported for five coefficient values, namely 1, 0.5, 0.75, 1.25, and 1.5. For each coefficient, we report the number of solved instances ($\#Solved$), the number of instances for which the optimal span is obtained ($\#Optimal$), the number of instances for which the best-known span is obtained ($\#Best$), and the average rank ($\#Avg.Rank$). We also note that the parallel and incremental SAT solving approaches are denoted by SAT\textsubscript{Par} and SAT\textsubscript{Inc}, respectively.

\begin{table*}[ht!]
\renewcommand{\arraystretch}{1.25}
\setlength{\tabcolsep}{3pt}
\centering
\caption{Experimental results on the base model.}
\label{tab:experimental-results-summary-base-model}
\resizebox{\textwidth}{!}{
\begin{tabular}{clcccccp{0.25cm}cccc}
\hline
\multirow{2}{*}{\textbf{Coef.}} &  & \multicolumn{5}{c}{\textbf{MSABL}} &  & \multicolumn{4}{c}{\textbf{MSCABL}} \\ \cline{3-7} \cline{9-12} 
 &  & \textbf{CPLEX\textsubscript{CP}} & \textbf{CPLEX\textsubscript{MIP}} & \textbf{Gurobi} & \textbf{SAT\textsubscript{Par}} & \textbf{SAT\textsubscript{Inc}} &  & \textbf{CPLEX\textsubscript{CP}} & \textbf{CPLEX\textsubscript{MIP}} & \textbf{Gurobi} & \textbf{SAT\textsubscript{Par}} \\ \hline
\multirow{4}{*}{0.5} & $\#Solved$ & 24 & 16 & 17 & 24 & 24 &  & 24 & 14 & 17 & 24 \\
 & $\#Optimal$ & 18 & 10 & 13 & \textbf{22} & \textbf{22} &  & 18 & 9 & 13 & \textbf{22} \\
 & $\#Best$ & 20 & 10 & 13 & 22 & \textbf{24} &  & 20 & 9 & 13 & \textbf{24} \\
 & $\#Avg.Rank$ & 2.63 & 3.9 & 3.65 & 2.48 & \textbf{2.35} &  & 2.06 & 3.25 & 2.81 & \textbf{1.88} \\ \hline
\multirow{4}{*}{0.75} & $\#Solved$ & 24 & 14 & 16 & 24 & 24 &  & 24 & 13 & 16 & 24 \\
 & $\#Optimal$ & 18 & 11 & 13 & \textbf{22} & \textbf{22} &  & 18 & 10 & 13 & \textbf{22} \\
 & $\#Best$ & 20 & 11 & 13 & \textbf{24} & \textbf{24} &  & 20 & 10 & 13 & \textbf{24} \\
 & $\#Avg.Rank$ & 2.69 & 3.9 & 3.58 & \textbf{2.42} & \textbf{2.42} &  & 2.08 & 3.17 & 2.85 & \textbf{1.9} \\ \hline
\multirow{4}{*}{1} & $\#Solved$ & 24 & 14 & 24 & 24 & 24 &  & 24 & 14 & 17 & 24 \\
 & $\#Optimal$ & 18 & 11 & 17 & \textbf{22} & \textbf{22} &  & 18 & 10 & 11 & \textbf{21} \\
 & $\#Best$ & 20 & 11 & 17 & \textbf{24} & \textbf{24} &  & 20 & 10 & 11 & \textbf{24} \\
 & $\#Avg.Rank$ & 2.83 & 4.08 & 3.08 & \textbf{2.5} & \textbf{2.5} &  & 2.02 & 3.17 & 2.96 & \textbf{1.85} \\ \hline
\multirow{4}{*}{1.25} & $\#Solved$ & 24 & 13 & 24 & 24 & 24 &  & 24 & 13 & 17 & 24 \\
 & $\#Optimal$ & 17 & 11 & 17 & 22 & \textbf{24} &  & 17 & 10 & 12 & \textbf{21} \\
 & $\#Best$ & 17 & 11 & 17 & 22 & \textbf{24} &  & 19 & 10 & 12 & \textbf{22} \\
 & $\#Avg.Rank$ & 2.94 & 4.06 & 2.96 & 2.65 & \textbf{2.4} &  & 2.08 & 3.15 & 2.88 & \textbf{1.9} \\ \hline
\multirow{4}{*}{1.5} & $\#Solved$ & 24 & 6 & 24 & 24 & 24 &  & 24 & 13 & 16 & 24 \\
 & $\#Optimal$ & 17 & 6 & 18 & \textbf{22} & \textbf{22} &  & 17 & 7 & 13 & \textbf{21} \\
 & $\#Best$ & 19 & 6 & 18 & \textbf{24} & \textbf{24} &  & 19 & 7 & 13 & \textbf{22} \\
 & $\#Avg.Rank$ & 2.79 & 4.5 & 2.92 & \textbf{2.4} & \textbf{2.4} &  & 2.02 & 3.33 & 2.79 & \textbf{1.85} \\ \hline
\multirow{4}{*}{Overall} & $\#Solved$ & 120 & 63 & 105 & 120 & 120 & \multicolumn{1}{c}{} & 120 & 67 & 83 & 120 \\
 & $\#Optimal$ & 88 & 49 & 78 & 110 & \textbf{112} & \multicolumn{1}{c}{} & 88 & 46 & 62 & \textbf{107} \\
 & $\#Best$ & 96 & 49 & 78 & 116 & \textbf{120} & \multicolumn{1}{c}{} & 98 & 46 & 62 & \textbf{116} \\
 & $\#Avg.Rank$ & 2.78 & 4.09 & 3.24 & 2.49 & \textbf{2.41} & \multicolumn{1}{c}{} & 2.05 & 3.21 & 2.86 & \textbf{1.88} \\ \hline
\multicolumn{12}{l}{The best result for each criterion is highlighted in bold.}
\end{tabular}
}
\end{table*}

For MSABL, the proposed SAT-based approaches achieve the best overall performance among the evaluated approaches. Both SAT\textsubscript{Par} and SAT\textsubscript{Inc} solve all 120 instances, matching CPLEX\textsubscript{CP}, while Gurobi and CPLEX\textsubscript{MIP} solve 105 and 63 instances, respectively. In terms of solution quality, SAT\textsubscript{Inc} obtains optimal spans for 112 instances and best-known spans for all 120 instances, while SAT\textsubscript{Par} obtains 110 optimal and 116 best-known solutions. By comparison, CPLEX\textsubscript{CP} obtains 88 optimal and 96 best-known solutions, whereas CPLEX\textsubscript{MIP} and Gurobi obtain substantially fewer optimal and best-known solutions. The average ranks also favor the SAT-based approaches. SAT\textsubscript{Inc} achieves the best overall rank of 2.41, followed by SAT\textsubscript{Par} with 2.49. CPLEX\textsubscript{CP}, Gurobi, and CPLEX\textsubscript{MIP} obtain average ranks of 2.78, 3.24, and 4.09, respectively. These results indicate that the SAT-based approaches provide a clear advantage in both solution quality and overall ranking, while also maintaining the highest number of solved instances.

The results across the five coefficient values further demonstrate the consistent performance of the SAT-based approaches. Both SAT\textsubscript{Par} and SAT\textsubscript{Inc} solve all 24 instances for every coefficient value. SAT\textsubscript{Inc} obtains at least 22 optimal solutions for each coefficient and achieves 24 best-known solutions for all five coefficients. It also achieves the best average rank for coefficients 0.5 and 1.25, with ranks of 2.35 and 2.40, respectively, while tying with SAT\textsubscript{Par} for the remaining coefficients. SAT\textsubscript{Par} similarly achieves competitive average ranks ranging from 2.40 to 2.65. The relatively small difference between the two SAT strategies suggests that both parallel and incremental solving are effective for MSABL, with the incremental strategy providing a slight overall advantage in solution quality and ranking.

For MSCABL, SAT\textsubscript{Par} achieves the best overall performance among the evaluated approaches. It solves all 120 instances, obtaining 107 optimal and 116 best-known solutions. CPLEX\textsubscript{CP} also solves all 120 instances, but obtains only 88 optimal and 98 best-known solutions. Gurobi and CPLEX\textsubscript{MIP} solve 83 and 67 instances, respectively, and obtain considerably fewer optimal and best-known solutions. The difference is also reflected in the average ranks. In particular, SAT\textsubscript{Par} achieves the best overall rank of 1.88, followed by CPLEX\textsubscript{CP} with 2.05, Gurobi with 2.86, and CPLEX\textsubscript{MIP} with 3.21. Thus, although CPLEX\textsubscript{CP} is competitive in terms of the number of solved instances, SAT\textsubscript{Par} provides substantially better solution quality and achieves the best overall ranking.

The results for the individual coefficient values further demonstrate the stability of SAT\textsubscript{Par} on MSCABL. It achieves the best average rank for all five coefficients, with ranks ranging from 1.85 to 1.90. It also obtains at least 21 optimal solutions for every coefficient and between 22 and 24 best-known solutions. In contrast, the optimization-based approaches show greater variation in both the number of solved instances and the quality of the obtained solutions, particularly for CPLEX\textsubscript{MIP} and Gurobi. Overall, these results show that the SAT-based approach maintains consistently strong performance across different scaling values for MSCABL.

To determine whether the observed differences in Table~\ref{tab:experimental-results-summary-base-model} are statistically significant, we perform the Wilcoxon signed-rank test \cite{garcia2009study}. The results are reported in Table~\ref{tab:wilcoxon-base-model}, with $p-value$ rounded to four decimal places. The test is conducted pairwise between each SAT-based and optimization-based approaches. For MSABL, we additionally compare SAT\textsubscript{Par} and SAT\textsubscript{Inc}.

\begin{table*}[ht!]
\renewcommand{\arraystretch}{1.25}
\setlength{\tabcolsep}{3pt}
\centering
\caption{Wilcoxon signed-rank test results for the base model.}
\label{tab:wilcoxon-base-model}
\resizebox{\textwidth}{!}{
\begin{tabular}{clccccp{0.25cm}ccccp{0.25cm}ccc}
\hline
\multicolumn{1}{l}{\multirow{3}{*}{Coef.}} & \multirow{3}{*}{} & \multicolumn{9}{c}{\textbf{MSABL}} &  & \multicolumn{3}{c}{\textbf{MSCABL}} \\ \cline{3-11} \cline{13-15} 
\multicolumn{1}{l}{} &  & \multicolumn{4}{c}{\textbf{SAT\textsubscript{Par}}} &  & \multicolumn{4}{c}{\textbf{SAT\textsubscript{Inc}}} &  & \multicolumn{3}{c}{\textbf{SAT\textsubscript{Par}}} \\ \cline{3-6} \cline{8-11} \cline{13-15} 
\multicolumn{1}{l}{} &  & \textbf{CPLEX\textsubscript{CP}} & \textbf{CPLEX\textsubscript{MIP}} & \textbf{Gurobi} & \textbf{SAT\textsubscript{Inc}} &  & \textbf{CPLEX\textsubscript{CP}} & \textbf{CPLEX\textsubscript{MIP}} & \textbf{Gurobi} & \textbf{SAT\textsubscript{Par}} &  & \textbf{CPLEX\textsubscript{CP}} & \textbf{CPLEX\textsubscript{MIP}} & \textbf{Gurobi} \\ \hline
\multirow{3}{*}{0.5} & $R^+$ & 14 & 105 & 66 & 0 &  & 10 & 105 & 66 & 3 &  & 10 & 120 & 66 \\
 & $R^-$ & 7 & 0 & 0 & 3 &  & 0 & 0 & 0 & 0 &  & 0 & 0 & 0 \\
 & $p-value$ & 0.4142 & 0.0009 & 0.0033 & 0.1573 &  & 0.0455 & 0.0009 & 0.0033 & 0.1573 &  & 0.0588 & 0.0006 & 0.0033 \\ \hline
\multirow{3}{*}{0.75} & $R^+$ & 10 & 91 & 66 & 0 &  & 10 & 91 & 66 & 0 &  & 10 & 105 & 66 \\
 & $R^-$ & 0 & 0 & 0 & 0 &  & 0 & 0 & 0 & 0 &  & 0 & 0 & 0 \\
 & $p-value$ & 0.0455 & 0.0015 & 0.0033 & - &  & 0.0455 & 0.0015 & 0.0033 & - &  & 0.0588 & 0.0010 & 0.0033 \\ \hline
\multirow{3}{*}{1} & $R^+$ & 10 & 91 & 28 & 0 &  & 10 & 91 & 28 & 0 &  & 10 & 105 & 91 \\
 & $R^-$ & 0 & 0 & 0 & 0 &  & 0 & 0 & 0 & 0 &  & 0 & 0 & 0 \\
 & $p-value$ & 0.0455 & 0.0015 & 0.0139 & - &  & 0.0455 & 0.0015 & 0.0139 & - &  & 0.0588 & 0.0010 & 0.0014 \\ \hline
\multirow{3}{*}{1.25} & $R^+$ & 15 & 91 & 17 & 0 &  & 28 & 91 & 28 & 3 &  & 17 & 105 & 78 \\
 & $R^-$ & 13 & 0 & 11 & 3 &  & 0 & 0 & 0 & 0 &  & 11 & 0 & 0 \\
 & $p-value$ & 0.8605 & 0.0015 & 0.6049 & 0.1573 &  & 0.0082 & 0.0015 & 0.0114 & 0.1573 &  & 0.6049 & 0.0010 & 0.0020 \\ \hline
\multirow{3}{*}{1.5} & $R^+$ & 15 & 171 & 21 & 0 &  & 15 & 171 & 21 & 0 &  & 17 & 153 & 66 \\
 & $R^-$ & 0 & 0 & 0 & 0 &  & 0 & 0 & 0 & 0 &  & 11 & 0 & 0 \\
 & $p-value$ & 0.0253 & 0.0002 & 0.0256 & - &  & 0.0253 & 0.0002 & 0.0256 & - &  & 0.6049 & 0.0003 & 0.0033 \\ \hline
\multirow{3}{*}{Overall} & $R^+$ & 275 & 2556 & 863 & 0 &  & 300 & 2556 & 903 & 10 &  & 271 & 2775 & 1711 \\
 & $R^-$ & 76 & 0 & 40 & 10 &  & 0 & 0 & 0 & 0 &  & 80 & 0 & 0 \\
 & $p-value$ & 0.0051 & 0.0000 & 0.0000 & 0.0633 &  & 0.0000 & 0.0000 & 0.0000 & 0.0633 &  & 0.0121 & 0.0000 & 0.0000 \\ \hline
\end{tabular}
}
\end{table*}

For MSABL, the overall results show that both SAT-based approaches significantly outperform CPLEX\textsubscript{MIP} and Gurobi. For SAT\textsubscript{Par}, the corresponding overall $p-value$ is approximately 0 for both comparisons. The same conclusion holds for SAT\textsubscript{Inc}, with $p-value$ of approximately 0. 
For the comparison with CPLEX\textsubscript{CP}, although the Wilcoxon tests for individual coefficient values do not always reach the commonly used significance level of 0.05, the results consistently favor the SAT-based approaches. Therefore, when the results for all coefficients are combined, the overall $p-value$ is $0.0051$ for SAT\textsubscript{Par} and approximately 0 for SAT\textsubscript{Inc}, indicating statistically significant improvements over CPLEX\textsubscript{CP}. This confirms that the superior average ranks and solution quality of the SAT-based approaches are not merely due to random variation but also supported by the statistical analysis.

The comparison between SAT\textsubscript{Par} and SAT\textsubscript{Inc} results in an overall $p-value$ of 0.0633. Therefore, although SAT\textsubscript{Inc} achieves slightly better overall results in terms of the number of optimal, best-known solutions, and average rank, the difference between the two SAT strategies cannot be considered statistically significant at the commonly used significance level of 0.05. This suggests that exploiting both parallel CPU resources and incremental SAT solving provides comparable performance for MSABL, with the incremental strategy offering a slight advantage.

For MSCABL, SAT\textsubscript{Par} significantly outperforms all three optimization-based approaches in the overall comparison. The corresponding $p$-values are $0.0121$ for CPLEX\textsubscript{CP} and approximately 0 for both CPLEX\textsubscript{MIP} and Gurobi. These results support the previous analysis, which shows that SAT\textsubscript{Par} achieves both a higher number of optimal and best-known solutions and a better overall average rank than the competing approaches.

Overall, the Wilcoxon signed-rank tests provide statistical support for the observations from the experimental results. Both SAT-based strategies provide statistically significant improvements over CPLEX\textsubscript{CP}, CPLEX\textsubscript{MIP}, and Gurobi on MSABL, while SAT\textsubscript{Par} also significantly outperforms all the other approaches on MSCABL. The difference between the parallel and incremental SAT strategies on MSABL, however, is not statistically significant at the commonly used significance level.

\subsection{Experimental Results on the No-hole Model}

Table~\ref{tab:experimental-results-summary-no-hole} reports the experimental results on the no-hole model for both MSABL and MSCABL. Overall, the SAT-based approaches perform competitively with the optimization-based approaches, particularly in terms of the number of optimal and best-known solutions and the average rank. The results are evaluated for three coefficient values, namely 0.5, 0.75, and 1, with the relative performance of the SAT variants varying slightly between MSABL and MSCABL.

\begin{table*}[ht!]
\renewcommand{\arraystretch}{1.25}
\setlength{\tabcolsep}{3pt}
\centering
\caption{Experimental results on the no-hole model.}
\label{tab:experimental-results-summary-no-hole}
\resizebox{\textwidth}{!}{%
\begin{tabular}{clcccccp{0.25cm}cccc}
\hline
\multirow{2}{*}{\textbf{Coef.}} &  & \multicolumn{5}{c}{\textbf{MSABL}} &  & \multicolumn{4}{c}{\textbf{MSCABL}} \\ \cline{3-7} \cline{9-12} 
 &  & \textbf{CPLEX\textsubscript{CP}} & \textbf{CPLEX\textsubscript{MIP}} & \textbf{Gurobi} & \textbf{SAT\textsubscript{Par}} & \textbf{SAT\textsubscript{Inc}} &  & \textbf{CPLEX\textsubscript{CP}} & \textbf{CPLEX\textsubscript{MIP}} & \textbf{Gurobi} & \textbf{SAT\textsubscript{Par}} \\ \hline
\multirow{4}{*}{0.5} & $\#Solved$ & 24 & 12 & 13 & 24 & 24 &  & 24 & 12 & 12 & 24 \\
 & $\#Optimal$ & 14 & 7 & 8 & \textbf{19} & 18 &  & 14 & 9 & 10 & \textbf{18} \\
 & $\#Best$ & 16 & 7 & 8 & \textbf{20} & 19 &  & 15 & 10 & 11 & \textbf{24} \\
 & $\#Avg.Rank$ & 2.56 & 4.1 & 3.92 & 2.25 & \textbf{2.17} &  & 2.13 & 3.1 & 3.02 & \textbf{1.75} \\ \hline
\multirow{4}{*}{0.75} & $\#Solved$ & \textbf{22} & 11 & 11 & 21 & 19 &  & 22 & 11 & 12 & \textbf{23} \\
 & $\#Optimal$ & 12 & 6 & 6 & \textbf{13} & \textbf{13} &  & 13 & 7 & 6 & \textbf{15} \\
 & $\#Best$ & 15 & 6 & 6 & \textbf{16} & \textbf{16} &  & 15 & 7 & 7 & \textbf{23} \\
 & $\#Avg.Rank$ & 2.38 & 3.92 & 3.92 & \textbf{2.33} & 2.46 &  & 2.06 & 3.17 & 3.1 & \textbf{1.67} \\ \hline
\multirow{4}{*}{1} & $\#Solved$ & \textbf{13} & 4 & 5 & 9 & 9 & \multicolumn{1}{c}{} & \textbf{20} & 6 & 9 & 10 \\
 & $\#Optimal$ & \textbf{7} & 3 & 2 & 4 & 4 & \multicolumn{1}{c}{} & \textbf{11} & 4 & 7 & 9 \\
 & $\#Best$ & \textbf{11} & 3 & 3 & 5 & 5 & \multicolumn{1}{c}{} & \textbf{18} & 4 & 7 & 10 \\
 & $\#Avg.Rank$ & \textbf{2.42} & 3.33 & 3.33 & 2.96 & 2.96 & \multicolumn{1}{c}{} & \textbf{1.75} & 3.06 & 2.71 & 2.48 \\ \hline
\multirow{4}{*}{Overall} & $\#Solved$ & \textbf{59} & 27 & 29 & 54 & 52 & \multicolumn{1}{c}{} & \textbf{66} & 29 & 33 & 57 \\
 & $\#Optimal$ & 33 & 16 & 16 & \textbf{36} & 35 & \multicolumn{1}{c}{} & 38 & 20 & 23 & \textbf{42} \\
 & $\#Best$ & \textbf{42} & 16 & 17 & 41 & 40 & \multicolumn{1}{c}{} & 48 & 21 & 25 & \textbf{57} \\
 & $\#Avg.Rank$ & \textbf{2.45} & 3.78 & 3.72 & 2.51 & 2.53 & \multicolumn{1}{c}{} & 1.98 & 3.11 & 2.94 & \textbf{1.97} \\ \hline
\multicolumn{12}{l}{The best result for each criterion is highlighted in bold.}
\end{tabular}%
}
\end{table*}

For MSABL, SAT\textsubscript{Par} achieves the highest number of optimal solutions for coefficients 0.5 and 0.75, obtaining 19 and 13 optimal solutions, respectively. SAT\textsubscript{Inc} performs similarly, obtaining 18 and 13 optimal solutions for the two coefficients. For coefficient 1, however, CPLEX\textsubscript{CP} performs better, obtaining 7 optimal solutions compared with 4 obtained by each SAT-based approach. Regarding the best-known solutions, SAT\textsubscript{Par} obtains 20, 16, and 5 best-known solutions for coefficients 0.5, 0.75, and 1, respectively, while SAT\textsubscript{Inc} obtains 19, 16, and 5. Overall, SAT\textsubscript{Par} obtains 36 optimal and 41 best-known solutions, whereas SAT\textsubscript{Inc} obtains 35 optimal and 40 best-known solutions. CPLEX\textsubscript{CP} obtains 33 optimal and 42 best-known solutions. These results indicate that the SAT-based approaches are competitive in terms of solution quality, although CPLEX\textsubscript{CP} remains slightly better in the overall number of best-known solutions.

The average-rank results provide a similar evaluation. CPLEX\textsubscript{CP} achieves the best overall average rank of 2.45, followed by SAT\textsubscript{Par} with 2.51 and SAT\textsubscript{Inc} with 2.53. Gurobi and CPLEX\textsubscript{MIP} obtain average ranks of 3.72 and 3.78, respectively. For coefficient 0.5, SAT\textsubscript{Inc} achieves the best average rank of 2.17, while SAT\textsubscript{Par} achieves the best average rank for coefficient 0.75 with a value of 2.33. For coefficient 1, CPLEX\textsubscript{CP} obtains the best average rank of 2.42. In terms of the number of solved instances, CPLEX\textsubscript{CP} solves 59 instances overall, compared with 54 and 52 solved by SAT\textsubscript{Par} and SAT\textsubscript{Inc}, respectively. Thus, although the SAT-based approaches obtain more optimal solutions than CPLEX\textsubscript{CP}, they do not achieve the best overall average rank or the highest number of solved and best-known instances on the no-hole model.

For MSCABL, SAT\textsubscript{Par} provides stronger performance in terms of solution quality. It solves 24, 23, and 10 instances for coefficients 0.5, 0.75, and 1, respectively, resulting in 57 solved instances overall. CPLEX\textsubscript{CP} solves 24, 22, and 20 instances for the three coefficients, resulting in 66 solved instances overall. SAT\textsubscript{Par} achieves the highest number of optimal solutions for coefficients 0.5 and 0.75, with 18 and 15 optimal solutions, respectively, while CPLEX\textsubscript{CP} obtains the highest number of optimal solutions for coefficient 1, with 11 compared to 9 obtained by SAT\textsubscript{Par}. Overall, SAT\textsubscript{Par} obtains 42 optimal and 57 best-known solutions, compared with 38 optimal and 48 best-known solutions obtained by CPLEX\textsubscript{CP}. SAT\textsubscript{Par} also obtains the highest number of best-known solutions for all three coefficients, with 24, 23, and 10, respectively.

The average-rank results further demonstrate the strength of SAT\textsubscript{Par} on MSCABL. It achieves the best average rank for coefficients 0.5 and 0.75, with values of 1.75 and 1.67, respectively. For coefficient 1, however, CPLEX\textsubscript{CP} achieves the best average rank of 1.75, compared with 2.48 obtained by SAT\textsubscript{Par}. Overall, SAT\textsubscript{Par} achieves an average rank of 1.97, slightly better than CPLEX\textsubscript{CP} with 1.98. CPLEX\textsubscript{MIP} and Gurobi obtain overall average ranks of 3.11 and 2.94, respectively. These results show that SAT\textsubscript{Par} provides particularly strong solution quality for MSCABL under the no-hole model, despite CPLEX\textsubscript{CP} solving more instances overall.

Similar to the evaluation on the base model, we apply the Wilcoxon signed-rank test to the paired experimental results on the no-hole model. The results are reported in Table~\ref{tab:wilcoxon-no-hole}, with the $p-value$ rounded to four decimal places. For MSABL, SAT\textsubscript{Par} and SAT\textsubscript{Inc} show statistically significant improvements over CPLEX\textsubscript{MIP} and Gurobi for all three coefficient values, with $p-value$ of approximately 0 in overall. However, neither SAT-based approaches show statistically significant difference from CPLEX\textsubscript{CP}, with the overall $p-value$ of 0.7061 for SAT\textsubscript{Par} and 0.2767 for SAT\textsubscript{Inc}. Thus, although the SAT-based approaches achieve more optimal solutions than CPLEX\textsubscript{CP}, the Wilcoxon test does not provide sufficient evidence to conclude that their overall performance differs significantly from CPLEX\textsubscript{CP} on the no-hole model. In addition, the comparison between SAT\textsubscript{Par} and SAT\textsubscript{Inc} also does not reach statistical significance, with an overall $p-value$ of 0.8334.

\begin{table*}[ht!]
\renewcommand{\arraystretch}{1.25}
\setlength{\tabcolsep}{1.5pt}
\centering
\caption{Wilcoxon signed-rank test results for the no-hole model.}
\label{tab:wilcoxon-no-hole}
\resizebox{\textwidth}{!}{
\begin{tabular}{clccccp{0.25cm}ccccp{0.25cm}ccc}
\hline
\multirow{3}{*}{Coef.} & \multirow{3}{*}{} & \multicolumn{9}{c}{\textbf{MSABL}} &  & \multicolumn{3}{c}{\textbf{MSCABL}} \\ \cline{3-11} \cline{13-15} 
 &  & \multicolumn{4}{c}{\textbf{SAT\textsubscript{Par}}} &  & \multicolumn{4}{c}{\textbf{SAT\textsubscript{Inc}}} &  & \multicolumn{3}{c}{\textbf{SAT\textsubscript{Par}}} \\ \cline{3-6} \cline{8-11} \cline{13-15} 
 &  & \textbf{CPLEX\textsubscript{CP}} & \textbf{CPLEX\textsubscript{MIP}} & \textbf{Gurobi} & \textbf{SAT\textsubscript{Inc}} &  & \textbf{CPLEX\textsubscript{CP}} & \textbf{CPLEX\textsubscript{MIP}} & \textbf{Gurobi} & \textbf{SAT\textsubscript{Par}} &  & \textbf{CPLEX\textsubscript{CP}} & \textbf{CPLEX\textsubscript{MIP}} & \textbf{Gurobi} \\ \hline
\multirow{3}{*}{0.5} & $R^+$ & 45 & 153 & 136 & 8 &  & 51 & 153 & 136 & 13 &  & 45 & 105 & 91 \\
 & $R^-$ & 21 & 0 & 0 & 13 &  & 15 & 0 & 0 & 8 &  & 0 & 0 & 0 \\
 & $p-value$ & 0.2826 & 0.0003 & 0.0004 & 0.5961 &  & 0.1053 & 0.0003 & 0.0004 & 0.5961 &  & 0.0070 & 0.0010 & 0.0015 \\ \hline
\multirow{3}{*}{0.75} & $R^+$ & 33 & 120 & 120 & 18 &  & 21 & 91 & 91 & 10 &  & 36 & 136 & 136 \\
 & $R^-$ & 33 & 0 & 0 & 10 &  & 45 & 0 & 0 & 18 &  & 0 & 0 & 0 \\
 & $p-value$ & 1.0000 & 0.0006 & 0.0006 & 0.4982 &  & 0.2848 & 0.0014 & 0.0014 & 0.4982 &  & 0.0112 & 0.0004 & 0.0004 \\ \hline
\multirow{3}{*}{1} & $R^+$ & 0 & 31 & 10 & 0 &  & 0 & 31 & 10 & 0 &  & 13.5 & 21 & 6 \\
 & $R^-$ & 21 & 5 & 0 & 0 &  & 21 & 5 & 0 & 0 &  & 64.5 & 0 & 0 \\
 & $p-value$ & 0.0273 & 0.0650 & 0.0656 & - &  & 0.0273 & 0.0650 & 0.0656 & - &  & 0.0443 & 0.0273 & 0.1088 \\ \hline
\multirow{3}{*}{Overall} & $R^+$ & 186.5 & 808 & 630 & 48.5 &  & 155.5 & 729 & 561 & 42.5 &  & 289.5 & 666 & 528 \\
 & $R^-$ & 219.5 & 12 & 0 & 42.5 &  & 250.5 & 12 & 0 & 48.5 &  & 145.5 & 0 & 0 \\
 & $p-value$ & 0.7061 & 0.0000 & 0.0000 & 0.8334 &  & 0.2767 & 0.0000 & 0.0000 & 0.8334 &  & 0.1171 & 0.0000 & 0.0000 \\ \hline
\end{tabular}
}
\end{table*}

For MSCABL, SAT\textsubscript{Par} significantly outperforms CPLEX\textsubscript{MIP} and Gurobi in the overall comparison, with an overall $p-value$ of approximately 0 for both comparisons. However, the difference between SAT\textsubscript{Par} and CPLEX\textsubscript{CP} is not statistically significant with overall $p-value$ = 0.1171. Although the individual comparisons with CPLEX\textsubscript{CP} yield $p-value$ below 0.05 for all three coefficients, the relative performance varies across coefficient values. For coefficients 0.5 and 0.75, SAT\textsubscript{Par} performs better, whereas for coefficient 1, CPLEX\textsubscript{CP} performs better. This variation weakens the overall difference, resulting in a non-significant overall $p-value$. Therefore, the statistical analysis supports a significant advantage of SAT\textsubscript{Par} over CPLEX\textsubscript{MIP} and Gurobi, but not over CPLEX\textsubscript{CP}.

Overall, the results demonstrate that the SAT-based approaches remain highly competitive on the no-hole model. For MSABL, SAT\textsubscript{Par} and SAT\textsubscript{Inc} obtain more optimal solutions than CPLEX\textsubscript{CP}, while CPLEX\textsubscript{CP} achieves the best overall average rank and solves the largest number of instances. The Wilcoxon tests further show that the SAT-based approaches are comparable to CPLEX\textsubscript{CP}, while significantly outperforming CPLEX\textsubscript{MIP} and Gurobi. For MSCABL, SAT\textsubscript{Par} achieves substantially more optimal and best-known solutions than the optimization-based approaches and obtains the best overall average rank, although CPLEX\textsubscript{CP} solves more instances. The statistical analysis confirms significant differences between SAT\textsubscript{Par} and both CPLEX\textsubscript{MIP} and Gurobi, but not between SAT\textsubscript{Par} and CPLEX\textsubscript{CP}. These results indicate that the SAT-based approaches are particularly effective in terms of solution quality under the no-hole model, while CPLEX\textsubscript{CP} remains competitive in terms of robustness and the number of solved instances.

\section{Conclusion}
\label{sec:conclusion}

In this paper, we introduce a unified SAT-based framework for the Minimum Span Antibandwidth Labeling (MSABL) and Minimum Span Cyclic Antibandwidth Labeling (MSCABL) problems. These problems provide a complementary perspective to the conventional Antibandwidth and Cyclic Antibandwidth problems by minimizing the label span required to satisfy a prescribed (cyclic) distance threshold. We formulated both problems as monotone decision problems and exploited this property to design an effective search strategy for determining the minimum feasible span. For both problems, we investigated parallel SAT solving, while incremental SAT solving was additionally considered for MSABL.

Extensive computational experiments on the Harwell-Boeing benchmark demonstrate the effectiveness of the proposed SAT-based approaches. On the base model, the SAT-based methods generally achieve stronger solution quality and competitive average ranks compared with the considered optimization-based approaches for both MSABL and MSCABL. In particular, SAT\textsubscript{Inc} achieves the best overall performance for MSABL, while SAT\textsubscript{Par} performs best overall for MSCABL. The Wilcoxon signed-rank tests provide statistical support for these observations, showing significant overall differences in favor of the SAT-based approaches in most comparisons. The experiments on the no-hole model lead to similar conclusions, that is, the SAT-based approaches remain competitive in terms of solution quality, while CPLEX\textsubscript{CP} remains competitive in terms of the number of solved instances in some cases.

Despite these promising results, several limitations remain. First, the incremental SAT strategy is currently applicable only to MSABL, whereas an effective incremental formulation for MSCABL remains to be developed. Second, the SAT encoding obtained from the underlying optimization models could be further improved to reduce its size and strengthen propagation. These limitations suggest several directions for future work, including the development of incremental techniques for MSCABL, more compact and propagation-efficient SAT encodings, stronger symmetry-breaking constraints, and adaptive strategies for span selection and computational resource allocation.

Overall, this work demonstrates the potential of SAT solving as an effective exact approach for minimum span graph labeling problems. By introducing SAT-based solution methods for MSABL and MSCABL, we provide a computational framework for studying the minimum span perspective of Antibandwidth and Cyclic Antibandwidth problems and establish a basis for further investigation of SAT-based methods for these and related graph labeling problems.







\begin{appendices}

\section{Monotonicity of the decision problem of the Minimum Span Labeling problem}
\label{appendix-monotonicity-property}

The SAT-based approach presented in this paper solves the Minimum Span Labeling problem as a sequence of decision problems with a fixed largest label $\lambda$. Proposition~\ref{prop:monotonocity-property} establishes that the feasibility of the decision problem is monotone, i.e., if a labeling exists for $\lambda$, then it also exists for every $\lambda'
\ge \lambda$. Equivalently, by contraposition, if the problem is infeasible for $\lambda$, then it is also infeasible for every $\lambda'' \le \lambda$.

\begin{proposition}
Given an undirected graph $G=(V,E)$, a fixed (cyclic) distance value $k$, and a candidate largest label $\lambda$, suppose that there exists a feasible labeling
\[
f:V\rightarrow\{1,\ldots,\lambda\}.
\]
Then, for every $\lambda' \ge \lambda$, there also exists a feasible labeling
\[
f':V\rightarrow\{1,\ldots,\lambda'\}.
\]
\label{prop:monotonocity-property}
\end{proposition}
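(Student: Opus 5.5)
The plan is to give a single explicit construction that works for both the distance constraint \eqref{eq:distance-k} and the cyclic distance constraint \eqref{eq:cyclic-distance-k}. It must also keep \eqref{eq:min-label} and \eqref{eq:max-label} satisfied, since \eqref{eq:max-label} is essential in the cyclic case. Given a feasible $f:V\rightarrow\{1,\ldots,\lambda\}$ and $\lambda'\ge\lambda$, I would ``stretch'' the top of the label range:
\[
f'(v)=\begin{cases} f(v), & f(v)<\lambda,\\ \lambda', & f(v)=\lambda.\end{cases}
\]
Equivalently, the $\lambda'-\lambda$ new labels are inserted between $\lambda$ and $1$ on the cycle. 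Clearly $f'(v)\in\{1,\ldots,\lambda'\}$ and \eqref{eq:labels} holds. Label $\lambda'$ is used because $\lambda$ was used. Label $1$ remains used when $\lambda\ge 2$. The degenerate case $\lambda=1$ has all vertices labelled $1$ and hence no edges, since $k\ge1$; there I would simply relabel one vertex with $\lambda'$.

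The core step is the edge check, split into three cases for $\{u,v\}\in E$. Write $d=|f(u)-f(v)|$.
\begin{itemize}
\item \emph{Both labels equal $\lambda$.} This is impossible, because then $d=0<k$ and $f$ would be infeasible.
\item \emph{Neither label equals $\lambda$.} The linear distance is unchanged, $d'=d\ge k$. In the cyclic case $\lambda'-d\ge\lambda-d$, so $\min(d,\lambda'-d)\ge\min(d,\lambda-d)\ge k$.
\item \emph{Exactly one label equals $\lambda$.} Then $d'=d+(\lambda'-\lambda)\ge d\ge k$ in the linear case. For the cyclic case, the key identity is $\lambda'-d'=\lambda-d$, which gives $\min(d',\lambda'-d')\ge\min(d,\lambda-d)\ge k$.
\end{itemize}

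The main obstacle I expect is exactly this cyclic case. The naive choice $f'=f$ fails: it violates \eqref{eq:max-label}, and evaluating cyclic distances over a larger domain is precisely the pitfall illustrated in the $\{1,\ldots,24\}$ example. The stretch construction is chosen to avoid it. Moving the vertices at label $\lambda$ to $\lambda'$ lengthens the ``forward'' arc while preserving the ``wrap-around'' arc, so no cyclic distance decreases. Once the identity $\lambda'-d'=\lambda-d$ is verified, the rest is routine. Finally, the contrapositive (infeasibility for $\lambda$ implies infeasibility for every $\lambda''\le\lambda$) follows immediately.
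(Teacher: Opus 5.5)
Your proof is correct and is essentially the paper's argument: the paper also raises a vertex labelled $\lambda$ to $\lambda'$, leaves the other labels fixed, and uses the same identity $\lambda'-|f'(u)-f'(v)| = f(v) = \lambda-|f(u)-f(v)|$ for edges at the moved vertex, together with $\lambda'-d\ge\lambda-d$ for all other edges. The only difference is that the paper moves a single vertex with label $\lambda$ rather than all of them, so it needs no separate treatment of $\lambda=1$; your special case there is exactly the paper's construction.
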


\begin{proof}
Suppose that
\[
f:V\rightarrow\{1,\ldots,\lambda\}
\]
is a feasible labeling. Since label $\lambda$ is required to be used, there exists at least one vertex $u\in V$ such that $f(u)=\lambda$. We construct a new labeling $f'$ by setting

\[
\left\{
\begin{aligned}
&f'(u)=\lambda', \qquad \lambda' \geq \lambda \\
&f'(v)=f(v), \qquad \forall v\in V\setminus\{u\}.
\end{aligned}
\right.
\]

Clearly,
\[
f':V\rightarrow\{1,\ldots,\lambda'\}
\]
is a valid labeling. Moreover, label $1$ remains assigned to at least one vertex, while label $\lambda'$ is assigned to $u$. Hence, \eqref{eq:labels}, \eqref{eq:min-label}, and \eqref{eq:max-label} are satisfied.

For every edge $\{x,y\}\in E$ with $x,y\in V\setminus\{u\}$, the assigned labels remain unchanged. Therefore, 
\[
|f'(x) - f'(y)| = |f(x) - f(y)|.
\]
Therefore
\[
\lambda' - |f'(x) - f'(y)| \ge \lambda - |f(x) - f(y)|,
\]
and
\[
\min\big( |f'(x) - f'(y)|,~\lambda' -  |f'(x) - f'(y)| \big) \ge \min\big( |f(x) - f(y)|,~\lambda -  |f(x) - f(y)| \big).
\]
Hence, \eqref{eq:distance-k} and \eqref{eq:cyclic-distance-k} remain satisfied on every edge not incident to $u$.

Now consider an edge $\{u,v\}\in E$. We have
\[
|f'(u)-f'(v)|=\lambda'-f'(v)\ge\lambda-f(v)=|f(u)-f(v)|,
\]
and
\[
\lambda' - |f'(u) - f'(v)| = \lambda' - |\lambda' - f'(v)| = f'(v) = f(v) = \lambda - |\lambda - f(v)| = \lambda - |f(u) - f(v)|.
\]
Therefore,
\[
\min\big( |f'(u) - f'(v)|, \lambda' -  |f'(u) - f'(v)| \big) \ge \min\big( |f(u) - f(v)|, \lambda -  |f(u) - f(v)| \big).
\]
Hence, \eqref{eq:distance-k} and \eqref{eq:cyclic-distance-k} are also preserved on every edge incident to $u$. 

Consequently, $f'$ satisfies all constraints of the decision problem, proving the proposition.
\end{proof}

\section{Applicability of Incremental SAT Solving to MSABL and MSCABL}
\label{sec:applicability-of-inc-sat-to-msl}

The incremental SAT approach exploits the fact that, for a fixed $k$, decreasing the candidate largest label $\lambda$ only restricts the labeling domain. In MSABL, given a labeling $f$, the distance between two labels of vertex $u$ and vertex $v$ is given by 
\[|f(u) - f(v)|,\]
which is independent of $\lambda$. Therefore, when $\lambda$ is decreased to $\lambda - 1$, the distance constraints remain unchanged, and the new instance can be obtained from the previous one by adding clauses that forbid the use of label $\lambda$. In particular, for every vertex $v$, we only need to add the clause
\[ \neg x_{v,\lambda}, \]
with  $x_{v, l}$ be $true$ if and only if vertex $v$ is assigned label $l$, and $false$ otherwise.
Hence, the SAT formula for a smaller candidate span can be obtained incrementally from the formula constructed for the larger span.

However, this property does not hold for MSCABL. It is because the cyclic distance depends explicitly on the size of the label domain. Given a labeling $f$, for two labels $f(u)$ and $f(v)$, the cyclic distance under a largest label $\lambda$ is defined as
\[ \min\{|a-b|,\lambda-|a-b|\}. \]
Consequently, changing $\lambda$ changes not only the available label set but also the distance between the labels that remain in the domain.

As a result, in the MSCABL, the SAT encoding constructed for $\lambda$ cannot be transformed into the encoding for $\lambda-1$ merely by adding clauses that exclude label $\lambda$. The clauses encoding the cyclic distance constraints may themselves need to be modified. Therefore, although the incremental SAT approach described above can be directly applied to MSABL, it is currently not applicable to MSCABL.

\section{Label Domain of the Cyclic Distance Constraint}
\label{sec:append-label-domain-cab}

Different from the distance constraint, the cyclic distance constraint depends not only on the labeling but also on the label domain. More precisely, for a label domain
\[
\{L,L+1,\ldots,\lambda\} \in \mathbb{Z},
\]
the perimeter of the underlying cycle is determined by the cardinality of the domain, namely \(\lambda-L+1\). Consequently, translating all labels by a constant changes the label domain and alters the cyclic distances between labels.

Figure~\ref{fig:dependence-label-domain-cyclic} illustrates this phenomenon by considering an edge \(\{u,v\}\), where \(u\) is assigned the smallest label and \(v\) the largest label of the domain. The four subfigures represent four translated versions of the same relative labeling. Although the relative positions of the labels are identical in all four cases, the computed cyclic distance is not translation invariant.

\begin{figure}
\centering

\begin{tikzpicture}[
    vertex/.style={circle,draw,fill=white,minimum size=7mm,inner sep=0pt,font=\small},
    every node/.style={font=\small},
    >=Latex
]

\def\r{1.25}
\def\ra{2}


\begin{scope}[shift={(0,0)}]

\node at (0,2.75) {\textbf{Label domain $\{-1, \ldots, 3\}$}};

\coordinate (C) at (0,0);

\draw ($(C)+(0:\r)$)
arc[start angle=0,end angle=360,radius=\r];

\foreach \a/\l/\name in {90/1/P1,18/2/P2,-54/3/P3,-126/4/P4,162/5/P5}
{
    \coordinate (\name) at ($(C)+(\a:\r)$);
    \fill (\name) circle (1.2pt);
}

\foreach \a/\l in {-90/-1,-162/0,126/1,54/2,-18/3}
    \node at ($(C)+(\a:-1)$) {\l};

\draw[dashed,red]
($(C)+(90:\ra)$)
arc[start angle=90,end angle=162,radius=\ra];
\node[left,red] at ($(C)+(126:{\ra+0.25})$) {$-1$};


\node[vertex] (U1) at ($(C)+(90:2)$) {$u$};
\node[vertex,draw=red] (V11) at ($(C)+(162:2)$) {$v$};

\draw[dotted] (U1)--(P1);
\draw[dotted] (V11)--(P5);

\end{scope}


\begin{scope}[shift={(8,0)}]

\node at (0,2.75) {\textbf{Label domain $\{0, \ldots, 4\}$}};

\coordinate (C) at (0,0);

\draw ($(C)+(0:\r)$)
arc[start angle=0,end angle=360,radius=\r];

\foreach \a/\l/\name in {90/1/P1,18/2/P2,-54/3/P3,-126/4/P4,162/5/P5}
{
    \coordinate (\name) at ($(C)+(\a:\r)$);
    \fill (\name) circle (1.2pt);
}

\foreach \a/\l in {-90/0,-162/1,126/2,54/3,-18/4}
    \node at ($(C)+(\a:-1)$) {\l};

\draw[dashed,red]
($(C)+(90:\ra)$)
arc[start angle=90,end angle=162,radius=\ra];
\node[left,red] at ($(C)+(126:{\ra+0.25})$) {$0$};


\node[vertex] (U2) at ($(C)+(90:2)$) {$u$};
\node[vertex,draw=red] (V21) at ($(C)+(162:2)$) {$v$};

\draw[dotted] (U2)--(P1);
\draw[dotted] (V21)--(P5);

\end{scope}


\begin{scope}[shift={(0,-5)}]

\node at (0,2.75) {\textbf{Label domain $\{1, \ldots, 5\}$}};

\coordinate (C) at (0,0);

\draw ($(C)+(0:\r)$)
arc[start angle=0,end angle=360,radius=\r];

\foreach \a/\l/\name in {90/1/P1,18/2/P2,-54/3/P3,-126/4/P4,162/5/P5}
{
    \coordinate (\name) at ($(C)+(\a:\r)$);
    \fill (\name) circle (1.2pt);
}

\foreach \a/\l in {-90/1,-162/2,126/3,54/4,-18/5}
    \node at ($(C)+(\a:-1)$) {\l};

\draw[dashed,green!60!black]
($(C)+(90:\ra)$)
arc[start angle=90,end angle=162,radius=\ra];
\node[left,green!60!black] at ($(C)+(126:{\ra+0.25})$) {$1$};


\node[vertex] (U3) at ($(C)+(90:2)$) {$u$};
\node[vertex,draw=green!60!black] (V31) at ($(C)+(162:2)$) {$v$};

\draw[dotted] (U3)--(P1);
\draw[dotted] (V31)--(P5);

\end{scope}


\begin{scope}[shift={(8,-5)}]

\node at (0,2.75) {\textbf{Label domain $\{2, \ldots, 6\}$}};

\coordinate (C) at (0,0);

\draw ($(C)+(0:\r)$)
arc[start angle=0,end angle=360,radius=\r];

\foreach \a/\l/\name in {90/1/P1,18/2/P2,-54/3/P3,-126/4/P4,162/5/P5}
{
    \coordinate (\name) at ($(C)+(\a:\r)$);
    \fill (\name) circle (1.2pt);
}

\foreach \a/\l in {-90/2,-162/3,126/4,54/5,-18/6}
    \node at ($(C)+(\a:-1)$) {\l};

\draw[dashed,red]
($(C)+(90:\ra)$)
arc[start angle=90,end angle=162,radius=\ra];
\node[left,red] at ($(C)+(126:{\ra+0.25})$) {$2$};


\node[vertex] (U4) at ($(C)+(90:2)$) {$u$};
\node[vertex,draw=red] (V41) at ($(C)+(162:2)$) {$v$};

\draw[dotted] (U4)--(P1);
\draw[dotted] (V41)--(P5);

\end{scope}


\draw[dashed,green!60!black] (-1,-7.25) -- +(1,0);
\node[right] at (0.25,-7.25) {Correct cyclic distance};

\draw[dashed,red] (5,-7.25) -- +(1,0);
\node[right] at (6.25,-7.25) {Incorrect cyclic distance};

\end{tikzpicture}

\caption{Dependence of the cyclic distance on the label domain.}
\label{fig:dependence-label-domain-cyclic}
\end{figure}

The corresponding numerical values are reported in Table~\ref{table:cab-u-v-in-dif-domains}. Although the absolute difference
\[
|f(u)-f(v)|=4
\]
remains unchanged across all four label domains, the value of 
\[
\lambda-|f(u)-f(v)|
\]
varies.
As a result, the computed cyclic distance is \(-1\), \(0\), \(1\), and \(2\) for the label domains \(\{-1,\ldots,3\}\), \(\{0,\ldots,4\}\), \(\{1,\ldots,5\}\), and \(\{2,\ldots,6\}\), respectively. Among these four cases, only the label domain \(\{1,\ldots,5\}\) yields the intended cyclic distance of \(1\), whereas the remaining three domains produce incorrect values.

\begin{table*}[ht!]
\caption{Cyclic antibandwidth value of edge $\{u, v\}$ in different label domains.}
\label{table:cab-u-v-in-dif-domains}
\renewcommand{\arraystretch}{1.5}
\resizebox{\textwidth}{!}{
\begin{tabular}{cccccccc}
\toprule
Label domain & $\lambda$ & $f(u)$ & $f(v)$ & $|f(u) - f(v)|$ & $\lambda - |f(u) - f(v)|$ & $D_c(u,v,f)$ & Is correct? \\ 
\midrule
$\{-1, \ldots, 3\}$ & 3 & -1 & 3 & 4 & -1 & -1 & \textcolor{red}{\ding{55}} \\
$\{0, \ldots, 4\}$ & 4 & 0 & 4 & 4 & 0 & 0 & \textcolor{red}{\ding{55}} \\ 
$\{1, \ldots, 5\}$ & 5 & 1 & 5 & 4 & 1 & 1 & \textcolor{green}{\ding{51}}  \\ 
$\{2, \ldots, 6\}$ & 6 & 2 & 6 & 4 & 2 & 2 & \textcolor{red}{\ding{55}} \\ 
\bottomrule
\end{tabular}
}
\end{table*}

Since the cyclic distance constraint is not invariant under translations of the labeling, one cannot assume, without loss of generality, that the minimum label is fixed to an arbitrary value such as \(0\). To obtain a unique and consistent definition of the cyclic distance, the label domain must be fixed to
\[
\{1,\ldots,\lambda\},
\]
where \(\lambda\) denotes the largest assigned label.

\section{Detailed Experimental Results}
\label{sec:detailed-experiment-results}

This section presents the detailed experimental results. For the base model, the results for coefficients 0.5, 0.75, 1, 1.25, and 1.5 are reported in Table~\ref{tab:experimental-results-0.5}, \ref{tab:experimental-results-0.75}, \ref{tab:experimental-results-1}, \ref{tab:experimental-results-1.25}, and \ref{tab:experimental-results-1.5}, respectively. For the no-hole model, the results for coefficients 0.5, 0.75, and 1 are reported in Tables~\ref{tab:experimental-results-0.5_no_hole}, \ref{tab:experimental-results-0.75-no-hole}, and \ref{tab:experimental-results-1-no-hole}, respectively. In these tables, the best result for each instance and comparison criterion is highlighted in bold. Additionally, an asterisk (*) indicates an optimal result, while a dash (-) indicates a timeout instance for which no feasible span is found.

\newcommand{\opt}[1]{#1\rlap{$^{\text{\fontsize{9}{10}\selectfont *}}$}}

\begin{table*}[ht!]
\renewcommand{\arraystretch}{1.25}
\setlength{\tabcolsep}{3pt}
\centering
\caption{Experimental results with coefficient 0.5.}
\label{tab:experimental-results-0.5}
\resizebox{\textwidth}{!}{
\begin{tabular}{lcp{0.25cm}cccccp{0.25cm}cccc}
\hline
\multicolumn{1}{c}{\multirow{2}{*}{\textbf{Graph}}} & \multirow{2}{*}{\textbf{|V|}} &  & \multicolumn{5}{c}{\textbf{MSABL}} &  & \multicolumn{4}{c}{\textbf{MSCABL}} \\ \cline{4-8} \cline{10-13} 
\multicolumn{1}{c}{} &  &  & \textbf{CPLEX\textsubscript{CP}} & \textbf{CPLEX\textsubscript{MIP}} & \textbf{Gurobi} & \textbf{SAT\textsubscript{Par}} & \textbf{SAT\textsubscript{Inc}} &  & \textbf{CPLEX\textsubscript{CP}} & \textbf{CPLEX\textsubscript{MIP}} & \textbf{Gurobi} & \textbf{SAT\textsubscript{Par}} \\ \hline
A-pores\_1 & 30 &  & \opt{9} & \opt{9} & \opt{9} & \opt{9} & \opt{9} &  & \opt{12} & \opt{12} & \opt{12} & \opt{12} \\
B-ibm32 & 32 &  & \opt{12} & \opt{12} & \opt{12} & \opt{12} & \opt{12} &  & \opt{15} & \opt{15} & \opt{15} & \opt{15} \\
C-bcspwr01 & 39 &  & \opt{16} & \opt{16} & \opt{16} & \opt{16} & \opt{16} &  & \opt{17} & \opt{17} & \opt{17} & \opt{17} \\
D-bcsstk01 & 48 &  & \opt{20} & 20 & \opt{20} & \opt{20} & \opt{20} &  & \opt{23} & 23 & \opt{23} & \opt{23} \\
E-bcspwr02 & 49 &  & \opt{20} & \opt{20} & \opt{20} & \opt{20} & \opt{20} &  & \opt{23} & \opt{23} & \opt{23} & \opt{23} \\
F-curtis54 & 54 &  & \opt{24} & 24 & \opt{24} & \opt{24} & \opt{24} &  & \opt{24} & 24 & \opt{24} & \opt{24} \\
G-will57 & 57 &  & \opt{24} & 24 & \opt{24} & \opt{24} & \opt{24} &  & \opt{24} & \opt{24} & \opt{24} & \opt{24} \\
H-impcol\_b & 59 &  & \opt{28} & 28 & 28 & \opt{28} & \opt{28} &  & \opt{23} & 23 & 23 & \opt{23} \\
I-ash85 & 85 &  & \opt{33} & \opt{33} & \opt{33} & \opt{33} & \opt{33} &  & \opt{39} & \opt{39} & \opt{39} & \opt{39} \\
J-nos4 & 100 &  & \opt{34} & \opt{34} & \opt{34} & \opt{34} & \opt{34} &  & \opt{47} & \opt{47} & \opt{47} & \opt{47} \\
K-dwt\_\_234 & 117 &  & \opt{50} & \opt{50} & \opt{50} & \opt{50} & \opt{50} &  & \opt{55} & \opt{55} & \opt{55} & \opt{55} \\
L-bcspwr03 & 118 &  & \opt{57} & \opt{57} & 57 & \opt{57} & \opt{57} &  & \opt{55} & 55 & 55 & \opt{55} \\
M-bcsstk06 & 420 &  & 187 & - & - & 188 & 187 &  & 191 & - & - & 191 \\
N-bcsstk07 & 420 &  & 187 & - & - & 188 & 187 &  & 191 & - & - & 191 \\
O-impcol\_d & 425 &  & \opt{180} & - & - & \opt{180} & \opt{180} &  & \opt{207} & - & - & \opt{207} \\
P-can\_\_445 & 445 &  & 180 & - & - & \textbf{\opt{180}} & \textbf{\opt{180}} &  & 213 & - & - & \textbf{\opt{181}} \\
Q-494\_bus & 494 &  & \opt{226} & \opt{226} & \opt{226} & \opt{226} & \opt{226} &  & \opt{245} & \opt{245} & \opt{245} & \opt{245} \\
R-dwt\_\_503 & 503 &  & 217 & - & - & \textbf{\opt{217}} & \textbf{\opt{217}} &  & 247 & - & - & \textbf{\opt{247}} \\
S-sherman4 & 546 &  & \opt{130} & \opt{130} & \opt{130} & \opt{130} & \opt{130} &  & \opt{258} & - & \opt{258} & \opt{258} \\
T-dwt\_\_592 & 592 &  & \opt{168} & - & - & \opt{168} & \opt{168} &  & \opt{224} & - & - & \opt{224} \\
U-662\_bus & 662 &  & \opt{330} & 330 & 330 & \opt{330} & \opt{330} &  & \opt{327} & 329 & \opt{327} & \opt{327} \\
V-nos6 & 675 &  & \opt{164} & - & \opt{164} & \opt{164} & \opt{164} &  & \opt{328} & - & 329 & \opt{328} \\
W-685\_bus & 685 &  & 340 & 340 & 341 & \textbf{\opt{340}} & \textbf{\opt{340}} &  & 341 & - & 341 & \textbf{\opt{341}} \\
X-can\_\_715 & 715 &  & 290 & - & - & \textbf{\opt{290}} & \textbf{\opt{290}} &  & 299 & - & - & \textbf{\opt{299}} \\ \hline
\multicolumn{2}{l}{$\#Solved$} &  & 24 & 16 & 17 & 24 & 24 &  & 24 & 14 & 17 & 24 \\
\multicolumn{2}{l}{$\#Optimal$} &  & 18 & 10 & 13 & \textbf{22} & \textbf{22} &  & 18 & 9 & 13 & \textbf{22} \\
\multicolumn{2}{l}{$\#Best$} &  & 20 & 10 & 13 & 22 & \textbf{24} &  & 20 & 9 & 13 & \textbf{24} \\
\multicolumn{2}{l}{$\#Avg.Rank$} &  & 2.63 & 3.9 & 3.65 & 2.48 & \textbf{2.35} &  & 2.06 & 3.25 & 2.81 & \textbf{1.88} \\ \hline
\multicolumn{13}{l}{The best result for each instance and comparison criterion is highlighted in bold.} \\
\multicolumn{13}{l}{An asterisk (*) indicates an optimal result.} \\
\multicolumn{13}{l}{A dash (-) indicates a timeout instance with no feasible span found.}
\end{tabular}
}
\end{table*}

\begin{table*}[ht!]
\renewcommand{\arraystretch}{1.25}
\setlength{\tabcolsep}{3pt}
\centering
\caption{Experimental results with coefficient 0.75.}
\label{tab:experimental-results-0.75}
\resizebox{\textwidth}{!}{
\begin{tabular}{lcp{0.25cm}cccccp{0.25cm}cccc}
\hline
\multicolumn{1}{c}{\multirow{2}{*}{\textbf{Graph}}} & \multirow{2}{*}{\textbf{|V|}} &  & \multicolumn{5}{c}{\textbf{MSABL}} &  & \multicolumn{4}{c}{\textbf{MSCABL}} \\ \cline{4-8} \cline{10-13} 
\multicolumn{1}{c}{} &  &  & \textbf{CPLEX\textsubscript{CP}} & \textbf{CPLEX\textsubscript{MIP}} & \textbf{Gurobi} & \textbf{SAT\textsubscript{Par}} & \textbf{SAT\textsubscript{Inc}} &  & \textbf{CPLEX\textsubscript{CP}} & \textbf{CPLEX\textsubscript{MIP}} & \textbf{Gurobi} & \textbf{SAT\textsubscript{Par}} \\ \hline
A-pores\_1 & 30 &  & \opt{12} & \opt{12} & \opt{12} & \opt{12} & \opt{12} &  & \opt{16} & \opt{16} & \opt{16} & \opt{16} \\
B-ibm32 & 32 &  & \opt{18} & \opt{18} & \opt{18} & \opt{18} & \opt{18} &  & \opt{23} & \opt{23} & \opt{23} & \opt{23} \\
C-bcspwr01 & 39 &  & \opt{24} & \opt{24} & \opt{24} & \opt{24} & \opt{24} &  & \opt{26} & \opt{26} & \opt{26} & \opt{26} \\
D-bcsstk01 & 48 &  & \opt{30} & 30 & \opt{30} & \opt{30} & \opt{30} &  & \opt{35} & \opt{35} & \opt{35} & \opt{35} \\
E-bcspwr02 & 49 &  & \opt{30} & \opt{30} & \opt{30} & \opt{30} & \opt{30} &  & \opt{35} & \opt{35} & \opt{35} & \opt{35} \\
F-curtis54 & 54 &  & \opt{36} & \opt{36} & \opt{36} & \opt{36} & \opt{36} &  & \opt{34} & \opt{34} & \opt{34} & \opt{34} \\
G-will57 & 57 &  & \opt{36} & \opt{36} & \opt{36} & \opt{36} & \opt{36} &  & \opt{39} & \opt{39} & \opt{39} & \opt{39} \\
H-impcol\_b & 59 &  & \opt{42} & 42 & 42 & \opt{42} & \opt{42} &  & \opt{39} & 39 & 39 & \opt{39} \\
I-ash85 & 85 &  & \opt{51} & \opt{51} & \opt{51} & \opt{51} & \opt{51} &  & \opt{59} & \opt{59} & \opt{59} & \opt{59} \\
J-nos4 & 100 &  & \opt{52} & \opt{52} & \opt{52} & \opt{52} & \opt{52} &  & \opt{71} & \opt{71} & \opt{71} & \opt{71} \\
K-dwt\_\_234 & 117 &  & \opt{76} & \opt{76} & \opt{76} & \opt{76} & \opt{76} &  & \opt{81} & 81 & \opt{81} & \opt{81} \\
L-bcspwr03 & 118 &  & \opt{87} & \opt{87} & \opt{87} & \opt{87} & \opt{87} &  & \opt{83} & \opt{83} & 83 & \opt{83} \\
M-bcsstk06 & 420 &  & 275 & - & - & 275 & 275 &  & 287 & - & - & 287 \\
N-bcsstk07 & 420 &  & 275 & - & - & 275 & 275 &  & 287 & - & - & 287 \\
O-impcol\_d & 425 &  & \opt{270} & - & - & \opt{270} & \opt{270} &  & \opt{311} & - & - & \opt{311} \\
P-can\_\_445 & 445 &  & 268 & - & - & \textbf{\opt{268}} & \textbf{\opt{268}} &  & 324 & - & - & \textbf{\opt{274}} \\
Q-494\_bus & 494 &  & \opt{340} & 365 & \opt{340} & \opt{340} & \opt{340} &  & \opt{368} & 369 & \opt{368} & \opt{368} \\
R-dwt\_\_503 & 503 &  & 329 & - & - & \textbf{\opt{329}} & \textbf{\opt{329}} &  & 367 & - & - & \textbf{\opt{367}} \\
S-sherman4 & 546 &  & \opt{195} & \opt{195} & \opt{195} & \opt{195} & \opt{195} &  & \opt{386} & - & \opt{386} & \opt{386} \\
T-dwt\_\_592 & 592 &  & \opt{252} & - & - & \opt{252} & \opt{252} &  & \opt{336} & - & - & \opt{336} \\
U-662\_bus & 662 &  & \opt{495} & - & 495 & \opt{495} & \opt{495} &  & \opt{491} & - & - & \opt{491} \\
V-nos6 & 675 &  & \opt{246} & - & - & \opt{246} & \opt{246} &  & \opt{492} & - & \opt{492} & \opt{492} \\
W-685\_bus & 685 &  & 510 & - & 510 & \textbf{\opt{510}} & \textbf{\opt{510}} &  & 509 & - & 509 & \textbf{\opt{509}} \\
X-can\_\_715 & 715 &  & 435 & - & - & \textbf{\opt{435}} & \textbf{\opt{435}} &  & 449 & - & - & \textbf{\opt{449}} \\ \hline
\multicolumn{2}{l}{$\#Solved$} &  & 24 & 14 & 16 & 24 & 24 &  & 24 & 13 & 16 & 24 \\
\multicolumn{2}{l}{$\#Optimal$} &  & 18 & 11 & 13 & \textbf{22} & \textbf{22} &  & 18 & 10 & 13 & \textbf{22} \\
\multicolumn{2}{l}{$\#Best$} &  & 20 & 11 & 13 & \textbf{24} & \textbf{24} &  & 20 & 10 & 13 & \textbf{24} \\
\multicolumn{2}{l}{$\#Avg.Rank$} &  & 2.69 & 3.9 & 3.58 & \textbf{2.42} & \textbf{2.42} &  & 2.08 & 3.17 & 2.85 & \textbf{1.9} \\ \hline
\multicolumn{13}{l}{The best result for each instance and comparison criterion is highlighted in bold.} \\
\multicolumn{13}{l}{An asterisk (*) indicates an optimal result.} \\
\multicolumn{13}{l}{A dash (-) indicates a timeout instance with no feasible span found.}
\end{tabular}
}
\end{table*}

\begin{table*}[ht!]
\renewcommand{\arraystretch}{1.25}
\setlength{\tabcolsep}{3pt}
\centering
\caption{Experimental results with coefficient 1.}
\label{tab:experimental-results-1}
\resizebox{\textwidth}{!}{
\begin{tabular}{lcp{0.25cm}cccccp{0.25cm}cccc}
\hline
\multicolumn{1}{c}{\multirow{2}{*}{\textbf{Graph}}} & \multirow{2}{*}{\textbf{|V|}} &  & \multicolumn{5}{c}{\textbf{MSABL}} &  & \multicolumn{4}{c}{\textbf{MSCABL}} \\ \cline{4-8} \cline{10-13} 
\multicolumn{1}{c}{} &  &  & \textbf{CPLEX\textsubscript{CP}} & \textbf{CPLEX\textsubscript{MIP}} & \textbf{Gurobi} & \textbf{SAT\textsubscript{Par}} & \textbf{SAT\textsubscript{Inc}} &  & \textbf{CPLEX\textsubscript{CP}} & \textbf{CPLEX\textsubscript{MIP}} & \textbf{Gurobi} & \textbf{SAT\textsubscript{Par}} \\ \hline
A-pores\_1 & 30 &  & \opt{18} & \opt{18} & \opt{18} & \opt{18} & \opt{18} &  & \opt{24} & \opt{24} & \opt{24} & \opt{24} \\
B-ibm32 & 32 &  & \opt{27} & \opt{27} & \opt{27} & \opt{27} & \opt{27} &  & \opt{31} & \opt{31} & \opt{31} & \opt{31} \\
C-bcspwr01 & 39 &  & \opt{34} & \opt{34} & \opt{34} & \opt{34} & \opt{34} &  & \opt{38} & \opt{38} & \opt{38} & \opt{38} \\
D-bcsstk01 & 48 &  & \opt{45} & 45 & \opt{45} & \opt{45} & \opt{45} &  & \opt{47} & 47 & \opt{47} & \opt{47} \\
E-bcspwr02 & 49 &  & \opt{42} & \opt{42} & \opt{42} & \opt{42} & \opt{42} &  & \opt{47} & \opt{47} & \opt{47} & \opt{47} \\
F-curtis54 & 54 &  & \opt{52} & \opt{52} & \opt{52} & \opt{52} & \opt{52} &  & \opt{49} & \opt{49} & \opt{49} & \opt{49} \\
G-will57 & 57 &  & \opt{52} & \opt{52} & \opt{52} & \opt{52} & \opt{52} &  & \opt{54} & \opt{54} & 54 & \opt{54} \\
H-impcol\_b & 59 &  & \opt{56} & 56 & \opt{56} & \opt{56} & \opt{56} &  & \opt{55} & 55 & 55 & \opt{55} \\
I-ash85 & 85 &  & \opt{69} & \opt{69} & \opt{69} & \opt{69} & \opt{69} &  & \opt{83} & \opt{83} & \opt{83} & \opt{83} \\
J-nos4 & 100 &  & \opt{70} & \opt{70} & \opt{70} & \opt{70} & \opt{70} &  & \opt{95} & \opt{95} & \opt{95} & \opt{95} \\
K-dwt\_\_234 & 117 &  & \opt{102} & \opt{102} & \opt{102} & \opt{102} & \opt{102} &  & \opt{110} & \opt{110} & \opt{110} & \opt{110} \\
L-bcspwr03 & 118 &  & \opt{117} & \opt{117} & \opt{117} & \opt{117} & \opt{117} &  & \opt{115} & \opt{115} & 115 & \opt{115} \\
M-bcsstk06 & 420 &  & 374 & - & 408 & 374 & 374 &  & 395 & - & - & 395 \\
N-bcsstk07 & 420 &  & 374 & - & 408 & 374 & 374 &  & 395 & - & - & 395 \\
O-impcol\_d & 425 &  & \opt{360} & - & \opt{360} & \opt{360} & \opt{360} &  & \opt{419} & - & - & \opt{419} \\
P-can\_\_445 & 445 &  & 360 & - & 360 & \textbf{\opt{360}} & \textbf{\opt{360}} &  & 433 & - & - & \textbf{367} \\
Q-494\_bus & 494 &  & \opt{454} & 493 & \opt{454} & \opt{454} & \opt{454} &  & \opt{491} & 493 & \opt{491} & \opt{491} \\
R-dwt\_\_503 & 503 &  & 441 & - & 441 & \textbf{\opt{441}} & \textbf{\opt{441}} &  & 495 & - & - & \textbf{\opt{495}} \\
S-sherman4 & 546 &  & \opt{261} & - & \opt{261} & \opt{261} & \opt{261} &  & \opt{516} & 542 & \opt{516} & \opt{516} \\
T-dwt\_\_592 & 592 &  & \opt{339} & - & \opt{339} & \opt{339} & \opt{339} &  & \opt{452} & - & - & \opt{452} \\
U-662\_bus & 662 &  & \opt{660} & - & 660 & \opt{660} & \opt{660} &  & \opt{659} & - & 660 & \opt{659} \\
V-nos6 & 675 &  & \opt{329} & \opt{329} & \opt{329} & \opt{329} & \opt{329} &  & \opt{656} & - & 657 & \opt{656} \\
W-685\_bus & 685 &  & 680 & - & 680 & \textbf{\opt{680}} & \textbf{\opt{680}} &  & 683 & - & 684 & \textbf{\opt{683}} \\
X-can\_\_715 & 715 &  & 580 & - & 580 & \textbf{\opt{580}} & \textbf{\opt{580}} &  & 605 & - & - & \textbf{\opt{605}} \\ \hline
\multicolumn{2}{l}{$\#Solved$} &  & 24 & 14 & 24 & 24 & 24 &  & 24 & 14 & 17 & 24 \\
\multicolumn{2}{l}{$\#Optimal$} &  & 18 & 11 & 17 & \textbf{22} & \textbf{22} &  & 18 & 10 & 11 & \textbf{21} \\
\multicolumn{2}{l}{$\#Best$} &  & 20 & 11 & 17 & \textbf{24} & \textbf{24} &  & 20 & 10 & 11 & \textbf{24} \\
\multicolumn{2}{l}{$\#Avg.Rank$} &  & 2.83 & 4.08 & 3.08 & \textbf{2.5} & \textbf{2.5} &  & 2.02 & 3.17 & 2.96 & \textbf{1.85} \\ \hline
\multicolumn{13}{l}{The best result for each instance and comparison criterion is highlighted in bold.} \\
\multicolumn{13}{l}{An asterisk (*) indicates an optimal result.} \\
\multicolumn{13}{l}{A dash (-) indicates a timeout instance with no feasible span found.}
\end{tabular}
}
\end{table*}

\begin{table*}[ht!]
\renewcommand{\arraystretch}{1.25}
\setlength{\tabcolsep}{3pt}
\centering
\caption{Experimental results with coefficient 1.25.}
\label{tab:experimental-results-1.25}
\resizebox{\textwidth}{!}{
\begin{tabular}{lcp{0.25cm}cccccp{0.25cm}cccc}
\hline
\multicolumn{1}{c}{\multirow{2}{*}{\textbf{Graph}}} & \multirow{2}{*}{\textbf{|V|}} &  & \multicolumn{5}{c}{\textbf{MSABL}} &  & \multicolumn{4}{c}{\textbf{MSCABL}} \\ \cline{4-8} \cline{10-13} 
\multicolumn{1}{c}{} &  &  & \textbf{CPLEX\textsubscript{CP}} & \textbf{CPLEX\textsubscript{MIP}} & \textbf{Gurobi} & \textbf{SAT\textsubscript{Par}} & \textbf{SAT\textsubscript{Inc}} &  & \textbf{CPLEX\textsubscript{CP}} & \textbf{CPLEX\textsubscript{MIP}} & \textbf{Gurobi} & \textbf{SAT\textsubscript{Par}} \\ \hline
A-pores\_1 & 30 &  & \opt{21} & \opt{21} & \opt{21} & \opt{21} & \opt{21} &  & \opt{28} & \opt{28} & \opt{28} & \opt{28} \\
B-ibm32 & 32 &  & \opt{33} & \opt{33} & \opt{33} & \opt{33} & \opt{33} &  & \opt{39} & \opt{39} & \opt{39} & \opt{39} \\
C-bcspwr01 & 39 &  & \opt{42} & \opt{42} & \opt{42} & \opt{42} & \opt{42} &  & \opt{47} & \opt{47} & \opt{47} & \opt{47} \\
D-bcsstk01 & 48 &  & \opt{55} & 55 & \opt{55} & \opt{55} & \opt{55} &  & \opt{59} & \opt{59} & \opt{59} & \opt{59} \\
E-bcspwr02 & 49 &  & \opt{52} & \opt{52} & \opt{52} & \opt{52} & \opt{52} &  & \opt{59} & \opt{59} & \opt{59} & \opt{59} \\
F-curtis54 & 54 &  & \opt{64} & \opt{64} & \opt{64} & \opt{64} & \opt{64} &  & \opt{59} & \opt{59} & 59 & \opt{59} \\
G-will57 & 57 &  & \opt{64} & \opt{64} & \opt{64} & \opt{64} & \opt{64} &  & \opt{64} & \opt{64} & 64 & \opt{64} \\
H-impcol\_b & 59 &  & 70 & 70 & \opt{70} & \opt{70} & \opt{70} &  & 63 & 63 & 63 & \textbf{\opt{63}} \\
I-ash85 & 85 &  & \opt{84} & \opt{84} & \opt{84} & \opt{84} & \opt{84} &  & \opt{103} & \opt{103} & \opt{103} & \opt{103} \\
J-nos4 & 100 &  & \opt{86} & \opt{86} & \opt{86} & \opt{86} & \opt{86} &  & \opt{119} & \opt{119} & \opt{119} & \opt{119} \\
K-dwt\_\_234 & 117 &  & \opt{126} & \opt{126} & \opt{126} & \opt{126} & \opt{126} &  & \opt{136} & \opt{136} & \opt{136} & \opt{136} \\
L-bcspwr03 & 118 &  & \opt{144} & \opt{144} & \opt{144} & \opt{144} & \opt{144} &  & \opt{143} & 143 & \opt{143} & \opt{143} \\
M-bcsstk06 & 420 &  & 462 & - & 462 & 464 & \textbf{\opt{462}} &  & \textbf{491} & - & - & 494 \\
N-bcsstk07 & 420 &  & 462 & - & 462 & 464 & \textbf{\opt{462}} &  & \textbf{491} & - & - & 494 \\
O-impcol\_d & 425 &  & \opt{450} & - & \opt{450} & \opt{450} & \opt{450} &  & \opt{523} & - & - & \opt{523} \\
P-can\_\_445 & 445 &  & 448 & - & 448 & \textbf{\opt{448}} & \textbf{\opt{448}} &  & 460 & - & - & \textbf{455} \\
Q-494\_bus & 494 &  & \opt{566} & - & \opt{566} & \opt{566} & \opt{566} &  & \opt{614} & 616 & \opt{614} & \opt{614} \\
R-dwt\_\_503 & 503 &  & 546 & - & 624 & \textbf{\opt{546}} & \textbf{\opt{546}} &  & 615 & - & - & \textbf{\opt{615}} \\
S-sherman4 & 546 &  & \opt{326} & \opt{326} & \opt{326} & \opt{326} & \opt{326} &  & \opt{644} & - & \opt{644} & \opt{644} \\
T-dwt\_\_592 & 592 &  & \opt{423} & - & \opt{423} & \opt{423} & \opt{423} &  & \opt{564} & - & - & \opt{564} \\
U-662\_bus & 662 &  & \opt{825} & - & 825 & \opt{825} & \opt{825} &  & \opt{823} & - & 823 & \opt{823} \\
V-nos6 & 675 &  & \opt{411} & - & \opt{411} & \opt{411} & \opt{411} &  & \opt{820} & - & \opt{820} & \opt{820} \\
W-685\_bus & 685 &  & 850 & - & 850 & \textbf{\opt{850}} & \textbf{\opt{850}} &  & 851 & - & 851 & \textbf{\opt{851}} \\
X-can\_\_715 & 715 &  & 725 & - & 725 & \textbf{\opt{725}} & \textbf{\opt{725}} &  & 755 & - & - & \textbf{\opt{755}} \\ \hline
\multicolumn{2}{l}{$\#Solved$} &  & 24 & 13 & 24 & 24 & 24 &  & 24 & 13 & 17 & 24 \\
\multicolumn{2}{l}{$\#Optimal$} &  & 17 & 11 & 17 & 22 & \textbf{24} &  & 17 & 10 & 12 & \textbf{21} \\
\multicolumn{2}{l}{$\#Best$} &  & 17 & 11 & 17 & 22 & \textbf{24} &  & 19 & 10 & 12 & \textbf{22} \\
\multicolumn{2}{l}{$\#Avg.Rank$} &  & 2.94 & 4.06 & 2.96 & 2.65 & \textbf{2.4} &  & 2.08 & 3.15 & 2.88 & \textbf{1.9} \\ \hline
\multicolumn{13}{l}{The best result for each instance and comparison criterion is highlighted in bold.} \\
\multicolumn{13}{l}{An asterisk (*) indicates an optimal result.} \\
\multicolumn{13}{l}{A dash (-) indicates a timeout instance with no feasible span found.}
\end{tabular}
}
\end{table*}

\begin{table*}[ht!]
\renewcommand{\arraystretch}{1.25}
\setlength{\tabcolsep}{3pt}
\centering
\caption{Experimental results with coefficient 1.5.}
\label{tab:experimental-results-1.5}
\resizebox{\textwidth}{!}{
\begin{tabular}{lcp{0.25cm}cccccp{0.25cm}cccc}
\hline
\multicolumn{1}{c}{\multirow{2}{*}{\textbf{Graph}}} & \multirow{2}{*}{\textbf{|V|}} &  & \multicolumn{5}{c}{\textbf{MSABL}} &  & \multicolumn{4}{c}{\textbf{MSCABL}} \\ \cline{4-8} \cline{10-13} 
\multicolumn{1}{c}{} &  &  & \textbf{CPLEX\textsubscript{CP}} & \textbf{CPLEX\textsubscript{MIP}} & \textbf{Gurobi} & \textbf{SAT\textsubscript{Par}} & \textbf{SAT\textsubscript{Inc}} &  & \textbf{CPLEX\textsubscript{CP}} & \textbf{CPLEX\textsubscript{MIP}} & \textbf{Gurobi} & \textbf{SAT\textsubscript{Par}} \\ \hline
A-pores\_1 & 30 &  & \opt{27} & \opt{27} & \opt{27} & \opt{27} & \opt{27} &  & \opt{36} & \opt{36} & \opt{36} & \opt{36} \\
B-ibm32 & 32 &  & \opt{39} & \opt{39} & \opt{39} & \opt{39} & \opt{39} &  & \opt{47} & \opt{47} & \opt{47} & \opt{47} \\
C-bcspwr01 & 39 &  & \opt{50} & \opt{50} & \opt{50} & \opt{50} & \opt{50} &  & \opt{56} & \opt{56} & \opt{56} & \opt{56} \\
D-bcsstk01 & 48 &  & \opt{65} & - & \opt{65} & \opt{65} & \opt{65} &  & \opt{71} & \opt{71} & \opt{71} & \opt{71} \\
E-bcspwr02 & 49 &  & \opt{62} & \opt{62} & \opt{62} & \opt{62} & \opt{62} &  & \opt{71} & \opt{71} & \opt{71} & \opt{71} \\
F-curtis54 & 54 &  & \opt{76} & - & \opt{76} & \opt{76} & \opt{76} &  & \opt{74} & \opt{74} & \opt{74} & \opt{74} \\
G-will57 & 57 &  & \opt{76} & - & \opt{76} & \opt{76} & \opt{76} &  & \opt{79} & 79 & \opt{79} & \opt{79} \\
H-impcol\_b & 59 &  & 84 & - & \opt{84} & \opt{84} & \opt{84} &  & 79 & 79 & 79 & \textbf{\opt{79}} \\
I-ash85 & 85 &  & \opt{102} & - & \opt{102} & \opt{102} & \opt{102} &  & \opt{123} & 123 & \opt{123} & \opt{123} \\
J-nos4 & 100 &  & \opt{104} & \opt{104} & \opt{104} & \opt{104} & \opt{104} &  & \opt{143} & 146 & \opt{143} & \opt{143} \\
K-dwt\_\_234 & 117 &  & \opt{152} & \opt{152} & \opt{152} & \opt{152} & \opt{152} &  & \opt{165} & \opt{165} & \opt{165} & \opt{165} \\
L-bcspwr03 & 118 &  & \opt{174} & - & \opt{174} & \opt{174} & \opt{174} &  & \opt{171} & 171 & 171 & \opt{171} \\
M-bcsstk06 & 420 &  & 561 & - & 612 & 561 & 561 &  & \textbf{587} & - & - & 593 \\
N-bcsstk07 & 420 &  & 561 & - & 612 & 561 & 561 &  & \textbf{587} & - & - & 593 \\
O-impcol\_d & 425 &  & \opt{540} & - & \opt{540} & \opt{540} & \opt{540} &  & \opt{627} & - & - & \opt{627} \\
P-can\_\_445 & 445 &  & 540 & - & 540 & \textbf{\opt{540}} & \textbf{\opt{540}} &  & 649 & - & - & \textbf{549} \\
Q-494\_bus & 494 &  & \opt{680} & - & \opt{680} & \opt{680} & \opt{680} &  & \opt{737} & - & \opt{737} & \opt{737} \\
R-dwt\_\_503 & 503 &  & 658 & - & 752 & \textbf{\opt{658}} & \textbf{\opt{658}} &  & 743 & - & - & \textbf{\opt{743}} \\
S-sherman4 & 546 &  & \opt{391} & - & \opt{391} & \opt{391} & \opt{391} &  & \opt{774} & - & \opt{774} & \opt{774} \\
T-dwt\_\_592 & 592 &  & \opt{507} & - & \opt{507} & \opt{507} & \opt{507} &  & \opt{676} & - & - & \opt{676} \\
U-662\_bus & 662 &  & \opt{990} & - & 990 & \opt{990} & \opt{990} &  & \opt{987} & - & 987 & \opt{987} \\
V-nos6 & 675 &  & \opt{493} & - & \opt{493} & \opt{493} & \opt{493} &  & \opt{984} & 989 & \opt{984} & \opt{984} \\
W-685\_bus & 685 &  & 1020 & - & \opt{1020} & \opt{1020} & \opt{1020} &  & 1025 & - & - & \textbf{\opt{1025}} \\
X-can\_\_715 & 715 &  & 870 & - & 870 & \textbf{\opt{870}} & \textbf{\opt{870}} &  & 905 & - & - & \textbf{\opt{905}} \\ \hline
\multicolumn{2}{l}{$\#Solved$} &  & 24 & 6 & 24 & 24 & 24 &  & 24 & 13 & 16 & 24 \\
\multicolumn{2}{l}{$\#Optimal$} &  & 17 & 6 & 18 & \textbf{22} & \textbf{22} &  & 17 & 7 & 13 & \textbf{21} \\
\multicolumn{2}{l}{$\#Best$} &  & 19 & 6 & 18 & \textbf{24} & \textbf{24} &  & 19 & 7 & 13 & \textbf{22} \\
\multicolumn{2}{l}{$\#Avg.Rank$} &  & 2.79 & 4.5 & 2.92 & \textbf{2.4} & \textbf{2.4} &  & 2.02 & 3.33 & 2.79 & \textbf{1.85} \\ \hline
\multicolumn{13}{l}{The best result for each instance and comparison criterion is highlighted in bold.} \\
\multicolumn{13}{l}{An asterisk (*) indicates an optimal result.} \\
\multicolumn{13}{l}{A dash (-) indicates a timeout instance with no feasible span found.}
\end{tabular}
}
\end{table*}

\begin{table*}[ht!]
\renewcommand{\arraystretch}{1.25}
\setlength{\tabcolsep}{3pt}
\centering
\caption{Experimental results with coefficient 0.5 on no-hole model.}
\label{tab:experimental-results-0.5_no_hole}
\resizebox{\textwidth}{!}{
\begin{tabular}{lcp{0.25cm}cccccp{0.25cm}cccc}
\hline
\multicolumn{1}{c}{\multirow{2}{*}{\textbf{Graph}}} & \multirow{2}{*}{\textbf{|V|}} &  & \multicolumn{5}{c}{\textbf{MSABL}} &  & \multicolumn{4}{c}{\textbf{MSCABL}} \\ \cline{4-8} \cline{10-13} 
\multicolumn{1}{c}{} &  &  & \textbf{CPLEX\textsubscript{CP}} & \textbf{CPLEX\textsubscript{MIP}} & \textbf{Gurobi} & \textbf{SAT\textsubscript{Par}} & \textbf{SAT\textsubscript{Inc}} &  & \textbf{CPLEX\textsubscript{CP}} & \textbf{CPLEX\textsubscript{MIP}} & \textbf{Gurobi} & \textbf{SAT\textsubscript{Par}} \\ \hline
A-pores\_1 & 30 &  & \opt{11} & \opt{11} & \opt{11} & \opt{11} & \opt{11} &  & \opt{12} & \opt{12} & \opt{12} & \opt{12} \\
B-ibm32 & 32 &  & \opt{12} & \opt{12} & \opt{12} & \opt{12} & \opt{12} &  & \opt{15} & \opt{15} & \opt{15} & \opt{15} \\
C-bcspwr01 & 39 &  & \opt{16} & \opt{16} & \opt{16} & \opt{16} & \opt{16} &  & \opt{17} & \opt{17} & \opt{17} & \opt{17} \\
D-bcsstk01 & 48 &  & \opt{20} & 20 & \opt{20} & \opt{20} & \opt{20} &  & \opt{23} & \opt{23} & \opt{23} & \opt{23} \\
E-bcspwr02 & 49 &  & \opt{20} & \opt{20} & \opt{20} & \opt{20} & \opt{20} &  & \opt{23} & \opt{23} & \opt{23} & \opt{23} \\
F-curtis54 & 54 &  & \opt{24} & 24 & \opt{24} & \opt{24} & \opt{24} &  & \opt{24} & \opt{24} & \opt{24} & \opt{24} \\
G-will57 & 57 &  & \opt{24} & \opt{24} & \opt{24} & \opt{24} & \opt{24} &  & \opt{24} & \opt{24} & \opt{24} & \opt{24} \\
H-impcol\_b & 59 &  & \opt{28} & 28 & 28 & \opt{28} & \opt{28} &  & \opt{23} & 23 & 23 & \opt{23} \\
I-ash85 & 85 &  & 34 & 38 & 37 & \textbf{\opt{33}} & \textbf{\opt{33}} &  & \opt{39} & 41 & \opt{39} & \opt{39} \\
J-nos4 & 100 &  & 41 & 41 & 41 & \textbf{\opt{41}} & \textbf{\opt{41}} &  & 49 & 49 & 49 & 49 \\
K-dwt\_\_234 & 117 &  & \opt{50} & \opt{50} & \opt{50} & \opt{50} & \opt{50} &  & \opt{55} & \opt{55} & \opt{55} & \opt{55} \\
L-bcspwr03 & 118 &  & \opt{57} & \opt{57} & 57 & \opt{57} & \opt{57} &  & \opt{55} & \opt{55} & \opt{55} & \opt{55} \\
M-bcsstk06 & 420 &  & \textbf{189} & - & - & 191 & 190 &  & 193 & - & - & \textbf{191} \\
N-bcsstk07 & 420 &  & \textbf{189} & - & - & 191 & 190 &  & 193 & - & - & \textbf{191} \\
O-impcol\_d & 425 &  & \textbf{\opt{180}} & - & - & 192 & 186 &  & \opt{207} & - & - & \opt{207} \\
P-can\_\_445 & 445 &  & 202 & - & - & \textbf{199} & 200 &  & 218 & - & - & \textbf{181} \\
Q-494\_bus & 494 &  & \opt{226} & - & 231 & \opt{226} & \opt{226} &  & \opt{245} & - & - & \opt{245} \\
R-dwt\_\_503 & 503 &  & 217 & - & - & \textbf{\opt{217}} & \textbf{\opt{217}} &  & 247 & - & - & \textbf{\opt{247}} \\
S-sherman4 & 546 &  & \opt{259} & - & - & \opt{259} & \opt{259} &  & 267 & - & - & \textbf{\opt{266}} \\
T-dwt\_\_592 & 592 &  & 254 & - & - & \textbf{\opt{223}} & 241 &  & 283 & - & - & \textbf{229} \\
U-662\_bus & 662 &  & \opt{330} & - & - & \opt{330} & \opt{330} &  & \opt{327} & - & - & \opt{327} \\
V-nos6 & 675 &  & 328 & - & - & \textbf{\opt{327}} & \textbf{\opt{327}} &  & 336 & - & - & \textbf{334} \\
W-685\_bus & 685 &  & 340 & - & - & \textbf{\opt{340}} & \textbf{\opt{340}} &  & 341 & - & - & \textbf{\opt{341}} \\
X-can\_\_715 & 715 &  & 308 & - & - & 297 & \textbf{292} &  & 299 & - & - & \textbf{\opt{299}} \\ \hline
\multicolumn{2}{l}{$\#Solved$} &  & 24 & 12 & 13 & 24 & 24 &  & 24 & 12 & 12 & 24 \\
\multicolumn{2}{l}{$\#Optimal$} &  & 14 & 7 & 8 & \textbf{19} & 18 &  & 14 & 9 & 10 & \textbf{18} \\
\multicolumn{2}{l}{$\#Best$} &  & 16 & 7 & 8 & \textbf{20} & 19 &  & 15 & 10 & 11 & \textbf{24} \\
\multicolumn{2}{l}{$\#Avg.Rank$} &  & 2.56 & 4.1 & 3.92 & 2.25 & \textbf{2.17} &  & 2.13 & 3.1 & 3.02 & \textbf{1.75} \\ \hline
\multicolumn{13}{l}{The best result for each instance and comparison criterion is highlighted in bold.} \\
\multicolumn{13}{l}{An asterisk (*) indicates an optimal result.} \\
\multicolumn{13}{l}{A dash (-) indicates a timeout instance with no feasible span found.}
\end{tabular}
}
\end{table*}

\begin{table*}[ht!]
\renewcommand{\arraystretch}{1.25}
\setlength{\tabcolsep}{3pt}
\centering
\caption{Experimental results with coefficient 0.75 on no-hole model.}
\label{tab:experimental-results-0.75-no-hole}
\resizebox{\textwidth}{!}{
\begin{tabular}{lcp{0.25cm}cccccp{0.25cm}cccc}
\hline
\multicolumn{1}{c}{\multirow{2}{*}{\textbf{Graph}}} & \multirow{2}{*}{\textbf{|V|}} &  & \multicolumn{5}{c}{\textbf{MSABL}} &  & \multicolumn{4}{c}{\textbf{MSCABL}} \\ \cline{4-8} \cline{10-13} 
\multicolumn{1}{c}{} &  &  & \textbf{CPLEX\textsubscript{CP}} & \textbf{CPLEX\textsubscript{MIP}} & \textbf{Gurobi} & \textbf{SAT\textsubscript{Par}} & \textbf{SAT\textsubscript{Inc}} &  & \textbf{CPLEX\textsubscript{CP}} & \textbf{CPLEX\textsubscript{MIP}} & \textbf{Gurobi} & \textbf{SAT\textsubscript{Par}} \\ \hline
A-pores\_1 & 30 &  & \opt{15} & \opt{15} & \opt{15} & \opt{15} & \opt{15} &  & \opt{17} & \opt{17} & \opt{17} & \opt{17} \\
B-ibm32 & 32 &  & \opt{18} & \opt{18} & \opt{18} & \opt{18} & \opt{18} &  & \opt{23} & \opt{23} & \opt{23} & \opt{23} \\
C-bcspwr01 & 39 &  & \opt{24} & \opt{24} & \opt{24} & \opt{24} & \opt{24} &  & \opt{26} & \opt{26} & \opt{26} & \opt{26} \\
D-bcsstk01 & 48 &  & \opt{30} & \opt{30} & 30 & \opt{30} & \opt{30} &  & \opt{35} & \opt{35} & 35 & \opt{35} \\
E-bcspwr02 & 49 &  & \opt{30} & \opt{30} & \opt{30} & \opt{30} & \opt{30} &  & \opt{35} & \opt{35} & \opt{35} & \opt{35} \\
F-curtis54 & 54 &  & \opt{36} & 36 & \opt{36} & \opt{36} & \opt{36} &  & \opt{34} & \opt{34} & \opt{34} & \opt{34} \\
G-will57 & 57 &  & \opt{36} & \opt{36} & \opt{36} & \opt{36} & \opt{36} &  & \opt{39} & \opt{39} & \opt{39} & \opt{39} \\
H-impcol\_b & 59 &  & \opt{42} & 42 & 42 & \opt{42} & \opt{42} &  & \opt{39} & 39 & 39 & \opt{39} \\
I-ash85 & 85 &  & 58 & - & - & \textbf{57} & \textbf{57} &  & \opt{59} & 62 & 62 & \opt{59} \\
J-nos4 & 100 &  & 68 & 72 & 69 & \textbf{\opt{68}} & \textbf{\opt{68}} &  & 74 & - & 74 & 74 \\
K-dwt\_\_234 & 117 &  & \opt{76} & 81 & 86 & \opt{76} & \opt{76} &  & 83 & 86 & 83 & \textbf{\opt{83}} \\
L-bcspwr03 & 118 &  & \opt{87} & 87 & 87 & \opt{87} & \opt{87} &  & \opt{83} & 83 & 83 & \opt{83} \\
M-bcsstk06 & 420 &  & 307 & - & - & 296 & \textbf{289} &  & 312 & - & - & \textbf{296} \\
N-bcsstk07 & 420 &  & 307 & - & - & 296 & \textbf{289} &  & 312 & - & - & \textbf{296} \\
O-impcol\_d & 425 &  & \textbf{292} & - & - & 310 & 306 &  & \opt{311} & - & - & \opt{311} \\
P-can\_\_445 & 445 &  & \textbf{307} & - & - & 311 & 310 &  & 328 & - & - & \textbf{275} \\
Q-494\_bus & 494 &  & \textbf{\opt{340}} & - & - & - & - &  & \opt{368} & - & - & \opt{368} \\
R-dwt\_\_503 & 503 &  & 366 & - & - & \textbf{345} & - &  & 375 & - & - & 375 \\
S-sherman4 & 546 &  & - & - & - & - & - &  & - & - & - & \textbf{402} \\
T-dwt\_\_592 & 592 &  & 399 & - & - & \textbf{386} & 424 &  & 423 & - & - & \textbf{344} \\
U-662\_bus & 662 &  & \opt{495} & - & - & \opt{495} & \opt{495} &  & \opt{491} & - & - & \opt{491} \\
V-nos6 & 675 &  & - & - & - & - & - &  & - & - & - & - \\
W-685\_bus & 685 &  & 510 & - & - & \textbf{\opt{510}} & \textbf{\opt{510}} &  & 509 & - & - & \textbf{\opt{509}} \\
X-can\_\_715 & 715 &  & \textbf{469} & - & - & 493 & - &  & 456 & - & - & \textbf{455} \\ \hline
\multicolumn{2}{l}{$\#Solved$} &  & \textbf{22} & 11 & 11 & 21 & 19 &  & 22 & 11 & 12 & \textbf{23} \\
\multicolumn{2}{l}{$\#Optimal$} &  & 12 & 6 & 6 & \textbf{13} & \textbf{13} &  & 13 & 7 & 6 & \textbf{15} \\
\multicolumn{2}{l}{$\#Best$} &  & 15 & 6 & 6 & \textbf{16} & \textbf{16} &  & 15 & 7 & 7 & \textbf{23} \\
\multicolumn{2}{l}{$\#Avg.Rank$} &  & 2.38 & 3.92 & 3.92 & \textbf{2.33} & 2.46 &  & 2.06 & 3.17 & 3.1 & \textbf{1.67} \\ \hline
\multicolumn{13}{l}{The best result for each instance and comparison criterion is highlighted in bold.} \\
\multicolumn{13}{l}{An asterisk (*) indicates an optimal result.} \\
\multicolumn{13}{l}{A dash (-) indicates a timeout instance with no feasible span found.}
\end{tabular}
}
\end{table*}

\begin{table*}[ht!]
\renewcommand{\arraystretch}{1.25}
\setlength{\tabcolsep}{3pt}
\centering
\caption{Experimental results with coefficient 1 on no-hole model.}
\label{tab:experimental-results-1-no-hole}
\resizebox{\textwidth}{!}{
\begin{tabular}{lcp{0.25cm}cccccp{0.25cm}cccc}
\hline
\multicolumn{1}{c}{\multirow{2}{*}{\textbf{Graph}}} & \multirow{2}{*}{\textbf{|V|}} &  & \multicolumn{5}{c}{\textbf{MSABL}} &  & \multicolumn{4}{c}{\textbf{MSCABL}} \\ \cline{4-8} \cline{10-13} 
\multicolumn{1}{c}{} &  &  & \textbf{CPLEX\textsubscript{CP}} & \textbf{CPLEX\textsubscript{MIP}} & \textbf{Gurobi} & \textbf{SAT\textsubscript{Par}} & \textbf{SAT\textsubscript{Inc}} &  & \textbf{CPLEX\textsubscript{CP}} & \textbf{CPLEX\textsubscript{MIP}} & \textbf{Gurobi} & \textbf{SAT\textsubscript{Par}} \\ \hline
A-pores\_1 & 30 &  & \opt{23} & \opt{23} & \opt{23} & \opt{23} & \opt{23} &  & \opt{26} & \opt{26} & \opt{26} & \opt{26} \\
B-ibm32 & 32 &  & \opt{29} & 29 & \opt{29} & \opt{29} & \opt{29} &  & \opt{31} & \opt{31} & \opt{31} & \opt{31} \\
C-bcspwr01 & 39 &  & 38 & \textbf{\opt{38}} & 38 & 38 & 38 &  & \opt{38} & \opt{38} & \opt{38} & \opt{38} \\
D-bcsstk01 & 48 &  & \textbf{\opt{46}} & - & - & 46 & 46 &  & \opt{47} & - & \opt{47} & \opt{47} \\
E-bcspwr02 & 49 &  & 45 & \textbf{\opt{45}} & 45 & 45 & 45 &  & \opt{47} & \opt{47} & \opt{47} & \opt{47} \\
F-curtis54 & 54 &  & \textbf{\opt{53}} & - & - & 53 & 53 &  & \opt{49} & 49 & \opt{49} & \opt{49} \\
G-will57 & 57 &  & 54 & - & 54 & 54 & 54 &  & \opt{54} & 54 & \opt{54} & \opt{54} \\
H-impcol\_b & 59 &  & \opt{56} & - & - & \opt{56} & \opt{56} &  & 55 & - & 55 & \textbf{\opt{55}} \\
I-ash85 & 85 &  & - & - & - & \textbf{-} & \textbf{-} &  & \textbf{\opt{84}} & - & - & - \\
J-nos4 & 100 &  & - & - & - & \textbf{-} & \textbf{-} &  & \textbf{99} & - & - & - \\
K-dwt\_\_234 & 117 &  & - & - & - & - & - &  & \textbf{116} & - & - & \textbf{-} \\
L-bcspwr03 & 118 &  & \opt{117} & - & - & \opt{117} & \opt{117} &  & \opt{115} & - & 116 & \opt{115} \\
M-bcsstk06 & 420 &  & - & - & - & - & \textbf{-} &  & - & - & - & \textbf{-} \\
N-bcsstk07 & 420 &  & - & - & - & - & \textbf{-} &  & - & - & - & \textbf{-} \\
O-impcol\_d & 425 &  & \textbf{-} & - & - & - & - &  & \textbf{422} & - & - & - \\
P-can\_\_445 & 445 &  & \textbf{-} & - & - & - & - &  & 440 & - & - & \textbf{377} \\
Q-494\_bus & 494 &  & \textbf{-} & - & - & - & - &  & \textbf{\opt{493}} & - & - & - \\
R-dwt\_\_503 & 503 &  & - & - & - & \textbf{-} & - &  & - & - & - & - \\
S-sherman4 & 546 &  & - & - & - & - & - &  & - & - & - & \textbf{-} \\
T-dwt\_\_592 & 592 &  & \textbf{551} & - & - & \textbf{-} & - &  & \textbf{581} & - & - & \textbf{-} \\
U-662\_bus & 662 &  & \textbf{\opt{661}} & - & - & - & - &  & \textbf{\opt{659}} & - & - & - \\
V-nos6 & 675 &  & - & - & - & - & - &  & \textbf{674} & - & - & - \\
W-685\_bus & 685 &  & \textbf{680} & - & - & \textbf{-} & \textbf{-} &  & \textbf{683} & - & - & \textbf{-} \\
X-can\_\_715 & 715 &  & \textbf{679} & - & - & - & - &  & \textbf{621} & - & - & \textbf{-} \\ \hline
\multicolumn{2}{l}{$\#Solved$} &  & \textbf{13} & 4 & 5 & 9 & 9 &  & \textbf{20} & 6 & 9 & 10 \\
\multicolumn{2}{l}{$\#Optimal$} &  & \textbf{7} & 3 & 2 & 4 & 4 &  & \textbf{11} & 4 & 7 & 9 \\
\multicolumn{2}{l}{$\#Best$} &  & \textbf{11} & 3 & 3 & 5 & 5 &  & \textbf{18} & 4 & 7 & 10 \\
\multicolumn{2}{l}{$\#Avg.Rank$} &  & \textbf{2.42} & 3.33 & 3.33 & 2.96 & 2.96 &  & \textbf{1.75} & 3.06 & 2.71 & 2.48 \\ \hline
\multicolumn{13}{l}{The best result for each instance and comparison criterion is highlighted in bold.} \\
\multicolumn{13}{l}{An asterisk (*) indicates an optimal result.} \\
\multicolumn{13}{l}{A dash (-) indicates a timeout instance with no feasible span found.}
\end{tabular}
}
\end{table*}
\end{appendices}


\clearpage

\bibliography{sn-bibliography}

\end{document}